\setlist[enumerate]{leftmargin=.5in}
\setlist[itemize]{leftmargin=.5in}
\newcommand{\RN}[1]{%
  \textup{\uppercase\expandafter{\romannumeral#1}}%
}
\def\V#1{{\ifcat#1\relax \boldsymbol{#1} \else \bf{#1} \fi} }   
\newcommand{\eqdef}{\stackrel{\vartriangle}{=}}
\newtheorem{remark}{Remark}
\newtheorem{definition}{Definition}
\newtheorem{theorem}{Theorem}
\newtheorem{proposition}{Proposition}
\newtheorem{lemma}{Lemma}
\title{Measuring Complexity of Learning Schemes Using  Hessian-Schatten Total Variation}
\author{Shayan Aziznejad\thanks{ Biomedical Imaging Group, EPFL, Lausanne, Switzerland (shayan.aziznejad@epfl.ch, joaquim.campos@epfl.ch,   michael.unser@epfl.ch) This work was supported in part by the European Research Council (ERC Project FunLearn) under Grant 101020573 and in part by the Swiss National Science Foundation, Grant 200020\_184646/1. }
\and Joaquim Campos \footnotemark[1] \and Michael Unser\footnotemark[1]  }
\begin{document}

\maketitle
\begin{abstract}
In this paper, we introduce the Hessian-Schatten total variation (HTV)---a novel seminorm that quantifies the total ``rugosity'' of multivariate functions. Our motivation for defining HTV is to  assess  the complexity of supervised-learning schemes. We start by specifying the adequate matrix-valued Banach spaces that are equipped with suitable classes of mixed norms. We then show that the HTV is invariant to rotations, scalings, and translations. Additionally, its minimum value is achieved for linear mappings,  which supports the common intuition that linear regression is the least complex learning model.  Next, we present closed-form expressions of the HTV for two general classes of functions. The first one is the class of Sobolev functions with a certain degree of regularity, for which we show that the HTV coincides with the Hessian-Schatten seminorm that is sometimes used as a regularizer for image reconstruction. The second one is the class of  continuous and piecewise-linear (CPWL) functions. In this case, we show that the HTV reflects the total change in slopes between linear regions that have a common facet. Hence, it can be viewed as a convex relaxation ($\ell_1$-type) of the number of linear regions ($\ell_0$-type) of CPWL mappings.   Finally, we illustrate the use of  our proposed seminorm.
 \end{abstract}
\textbf{Key words:}  Hessian operator, Schatten norm, total variation, continuous and piecewise-linear functions, supervised learning.

\section{Introduction} 
Given the sequence $(\boldsymbol{x}_m,y_m)\in \mathbb{R}^d\times \mathbb{R},m=1,\ldots,M$ of data points, the goal of supervised learning is to construct a mapping $f:\mathbb{R}^d\rightarrow \mathbb{R}$ that adequately explains the data, {\it i.e.} $f(\boldsymbol{x}_m)\approx y_m$, while avoiding the problem of overfitting \cite{wahba1990spline,gyorfi2006distribution,hastie2009overview}. This is often formulated as a minimization problem of the  form
\begin{equation}\label{Eq:Learning}
\min_{f\in \mathcal{F}} \left( \sum_{m=1}^M E\left(f(\boldsymbol{x}_m),y_m\right) + \lambda \mathcal{R}(f)\right),
\end{equation}
where $\mathcal{F}$ is the search space, $E:\mathbb{R}\times\mathbb{R}$ is a loss function that quantifies data discrepancy, and $\mathcal{R}:\mathcal{F}\rightarrow \mathbb{R}$ is a functional that enforces regularization. The regularization parameter $\lambda>0$  adjusts the contribution of the two terms. A classical example is learning over reproducing-kernel Hilbert spaces (RKHS), where $\mathcal{F}=\mathcal{H}(\mathbb{R}^d)$ is an RKHS and $\mathcal{R}(f)=\|f\|_{\mathcal{H}}^2$ \cite{poggio1990networks,poggio1990regularization}.  The key result in this framework is the kernel representer theorem that  provides a parametric form for the learned mapping \cite{kimeldorf1971some,scholkopf2001generalized}. This foundational result is at the heart of many kernel-based schemes, such as support-vector machines  \cite{scholkopf2001learning,evgeniou2000regularization,steinwart2008support}. 
Moreover,  there has been an interesting line of works regarding the statistical optimality of kernel-based methods \cite{caponnetto2007optimal,mendelson2010regularization,rakhlin2019consistency,steinwart2009optimal}. 
A central element in these analyses is that the regularization functional $\mathcal{R}(\cdot)$ (in this case, the underlying Hilbertian norm) directly controls the complexity of the learned mapping \cite[Section 2.4]{bartlett2021deep}.

Although kernel methods are supported by a sound theory, they have been outperformed by deep neural networks (DNNs) in various areas of application \cite{lecun2015deep,goodfellow2016deep}.  DNN-based methods are the current state of the art in several image processing tasks, such as inverse problems \cite{Jin2017deep}, image classification  \cite{krizhevsky2012imagenet},  and image segmentation \cite{Ronneberger2015}. Unlike kernel methods, DNNs have   intricate nonlinear  structures and the reason of their outstanding performance  is not yet fully understood \cite{bartlett2021deep}.   A possible approach to the comparison of DNNs is to quantify the ``complexity'' of the learned mapping. For example, neural networks with rectified linear units (ReLU),  ${\rm ReLU}(x) = \max(x,0)$ \cite{Glorot2011}, are known to produce continuous and piecewise-linear (CPWL) mappings. Consequently, the number of linear regions of the input-output mapping has been proposed as  a measure of complexity  in this case \cite{pascanu2013number,hanin2019complexity}. While this is an interesting metric to study, it has two limitations. The first is that this quantity is only defined for CPWL functions and, consequently, only applicable to ReLU neural networks. This prevents one from building a framework that would include neural networks with more modern activation functions  \cite{clevert2015fast,hendrycks2016gaussian,ramachandran2017searching,misra2019mish}. The second limitation is that this measure is not robust, in the sense that the input-output mapping might have many small regions around the training data points and still be able to generalize well.  This phenomena, which is called ``benign overfitting'' \cite{bartlett2020benign,li2021towards}, cannot be reflected in the aforementioned complexity measure.

In this paper, we introduce a novel seminorm---the Hessian-Schatten total variation (HTV)---and we propose its use as  a way to quantify the complexity of learning schemes.  Our definition of the HTV is based on a second-order extension of the space of functions with bounded variation \cite{ambrosio2000functions}.   We show that the HTV seminorm  satisfies the following desirable properties: 
\begin{enumerate}
\item It assigns the zero value for linear regression, which is the simplest learning scheme.   
\item  It is invariant (up to a multiplicative factor) to simple transformations (such as linear isometries and scaling) over the input domain.  
\item It is defined for both smooth and CPWL functions. Hence, it is applicable to a broad class of learning schemes, including ReLU neural networks and radial-basis functions.
\item It  favors CPWL functions with a small number of linear regions, thus promoting  a simpler (and, hence, more interpretable) representation of the data (Occam's razor principle). 
\end{enumerate}
 We provide closed-form formulas for the HTV of both smooth and CPWL functions. For smooth functions, the HTV  coincides with the Hessian-Schatten seminorm which is often used as a regularization term in linear inverse problems \cite{Lefki2012Hessian,Lefki2013Hessian}.  For CPWL functions, the HTV is a convex relaxation of the number of linear regions. This is analogous to the classical $\ell_0$ penalty in the field of compressed sensing,  where it is often replaced by its convex proxy, the $\ell_1$ norm,  to ensure tractability \cite{donoho2006compressed,eldar2012compressed}.  
  
The paper is organized as follows: We start Section \ref{sec:prelim} with some mathematical preliminaries that are essential for this paper. In Section \ref{sec:HTV}, we introduce the HTV seminorm and  prove its desirable properties. We then compute the HTV of two general classes of functions (smooth and CPWL) in Section \ref{sec:closed}. Finally,  we illustrate the practical aspects of our proposed seminorm with  examples in Section \ref{sec:numerical}. 
\section{Preliminaries}\label{sec:prelim}
  Throughout the paper, we denote the input domain by $\Omega\subseteq\mathbb{R}^d$.  Throughout the paper, we assume $\Omega$ to be an open ball of radius $R>0$,  with the convention that the case $R=+\infty$  corresponds to $\Omega=\mathbb{R}^d$. 

 \subsection{Schatten Matrix Norms}
For any $p\in[1,+\infty]$,   the Schatten-$p$ norm of a   real-valued matrix ${\bf A}\in\mathbb{R}^{d\times d}$ is defined as
\begin{equation}\label{Eq:SpNormDef} 
\|{\bf A}\|_{S_p} \eqdef \begin{cases}\left(\sum_{i=1}^d |\sigma_i({\bf A})|^p\right)^{\frac{1}{p}}, & 1\leq p<+\infty\\ \max_i |\sigma_i({\bf A})|, & p=+\infty,\end{cases}
\end{equation}
where $\left(\sigma_1({\bf A}),\ldots,\sigma_d({\bf A})\right)$ are the singular values of ${\bf A}$  \cite{bhatia2013matrix}. It is known that  the dual of the Schatten-$p$ norm is the Schatten-$q$ norm, where $q\in [1,\infty]$ is the H\"older conjugate of $p$ such that  $\frac{1}{p}+\frac{1}{q}=1$.  This result stems from   a variant of the H\"older  inequality for Schatten norms.  It states that 
\begin{equation}\label{Eq:Holder}
\langle {\bf A}, {\bf B}\rangle \eqdef \mathrm{Tr}\left( {\bf A}^T {\bf B} \right) \leq \|{\bf A}\|_{S_p} \|{\bf B}\|_{S_q} 
\end{equation}
for any pair of matrices ${\bf A}, {\bf B} \in \mathbb{R}^{d\times d}$ (see \cite{Lefki2015StructureTensor} for a simple proof). 
\subsection{Total-Variation Norm} 
\label{subsec:saclarspac}
  Schwartz' space of infinitely differentiable and compactly supported test functions $\varphi:\Omega \rightarrow \mathbb{R}$ is denoted by $\mathcal{D}(\Omega)$. Its continuous dual  $\mathcal{D}'(\Omega)$ is the space of   distributions  \cite{schwartz1957theorie}.   The Banach space $\mathcal{C}_0(\Omega)$  is the   completion of $\mathcal{D}(\Omega)$ with respect to the $L_\infty$ norm $\|f\|_{L_\infty} \eqdef \sup_{\boldsymbol{x}\in\Omega} |f(\boldsymbol{x})|$.  The bottom line is that the space $\mathcal{C}_0(\Omega)$ is formed of  continuous functions $f:\Omega \rightarrow\mathbb{R}$ that vanish at infinity. The Riesz-Markov theorem states that  the dual  of $\mathcal{C}_0(\Omega)$ is the space $\mathcal{M}(\Omega)=\left( \mathcal{C}_0(\Omega)\right)'$  of bounded Radon measures equipped with the total-variation norm \cite{rudin2006real}
\begin{equation}\label{Eq:TVnorm}
\|w\|_{\mathcal{M}}\eqdef \sup_{\varphi\in \mathcal{D}(\Omega)\backslash \{0\}} \frac{\langle w,\varphi\rangle}{\|\varphi\|_{\infty}}.
\end{equation}
The space $\mathcal{M}(\Omega)$  is a superset of    the space $L_1(\Omega)$ of absolutely integrable  measurable functions  with $\|f\|_{\mathcal{M}} = \|f\|_{L_1}$ for any $f\in L_1(\Omega)$. Moreover, it contains   shifted Dirac impulses with $\|\delta(\cdot - \boldsymbol{x}_0) \|_{\mathcal{M}}= 1$ for any $\boldsymbol{x}_0\in\Omega$. The latter can be generalized to any distribution of the form $w_{\boldsymbol{a}} = \sum_{n\in\mathbb{Z}} a_n \delta(\cdot-\boldsymbol{x}_n)$ with $\|w_{\boldsymbol{a}}\|_{\mathcal{M}} = \|\boldsymbol{a}\|_{\ell_1}$ for any $ \boldsymbol{a}=(a_n)\in \ell_1(\mathbb{Z})$ and any sequence of distinct locations $(\boldsymbol{x}_n)\subseteq \Omega$. 
%Finally, the extreme points of the unit ball of the total-variation norm are of the form $\pm\delta(\cdot - \boldsymbol{x}_0)$ with $\boldsymbol{x}_0 \in \Omega$ \cite{boyer2019representer}.
\subsection{Matrix-Valued Banach Spaces}\label{Subsec:MValuedSpc}
 In this work, we are interested in the matrix-valued extension of the spaces defined in Section \ref{subsec:saclarspac}. We denote by $\mathcal{C}_0(\Omega;\mathbb{R}^{d\times d})$ the space of continuous matrix-valued functions  $\mathbf{F}: \Omega \rightarrow \mathbb{R}^{d\times d}$  that vanish at infinity so that  $\lim_{\|\boldsymbol{x}\|\rightarrow\infty} \|\mathbf{F}(\boldsymbol{x}) \|= 0$ whenever the domain is unbounded. (Note that this definition does not depend on the choice of the norms,  because they are all equivalent in finite-dimensional vector spaces.) Any matrix-valued function $\mathbf{F} :\Omega \rightarrow \mathbb{R}^{d\times d}$ has the unique representation 
\begin{equation} \label{Eq:MatrixRep}
 \mathbf{F}= [f_{i,j}]=  \begin{pmatrix}
 f_{1,1} & \cdots & f_{1,d}  \\
\vdots & \ddots & \vdots \\
 f_{d,1} & \cdots & f_{d,d} 
 \end{pmatrix}, 
\end{equation}
where each entry $f_{i,j}:\Omega\rightarrow \mathbb{R}$ is a scalar-valued function for $i,j=1,\ldots,d$. In this representation, the space $\mathcal{C}_0(\Omega;\mathbb{R}^{d\times d})$ is the collection of matrix-valued functions of the form \eqref{Eq:MatrixRep} with $f_{i,j} \in \mathcal{C}_0(\Omega)$.
\begin{definition} \label{Def:LinfSinf}
Let $q \in [1,+\infty]$. For any $\mathbf{F}\in\mathcal{C}_0(\Omega;\mathbb{R}^{d\times d})$, the $L_{\infty}$-$S_{q}$  mixed norm  is defined as
\begin{equation}\label{Eq:LinfSinf}
\|\mathbf{F}\|_{L_\infty,S_{q}} \eqdef \left\|  \begin{pmatrix}
 \|f_{1,1}\|_{L_\infty} & \cdots & \|f_{1,d}\|_{L_\infty}  \\
\vdots & \ddots & \vdots \\
 \|f_{d,1}\|_{L_\infty} & \cdots & \|f_{d,d}\|_{L_\infty} 
 \end{pmatrix} \right\|_{S_{q}}. 
\end{equation}
\end{definition}
\begin{remark}
 In Definition \ref{Def:LinfSinf}, the $S_q$-norm appears as the outer norm. We remain faithful to this convention throughout the paper and always denote mixed norms in order of appearance, where the first  is the inner-norm and the second the outer-norm. 
\end{remark} 
Following \cite{unser2020directsum}, we deduce that  $\left(\mathcal{C}_0(\Omega;\mathbb{R}^{d\times d}),\|\cdot\|_{L_\infty,S_{q}}\right)$ is a {\it bona fide} Banach space, whose dual is  $(\mathcal{M}  (\Omega,\mathbb{R}^{d\times d}),\|\cdot\|_{\mathcal{M},S_p})$, where $\mathcal{M}(\Omega;\mathbb{R}^{d\times d})$   is  the collection of matrix-valued Radon measures of the form 
\begin{equation} 
 \mathbf{W}=  [w_{i,j}]=\begin{pmatrix}
 w_{1,1} & \cdots & w_{1,d}  \\
\vdots & \ddots & \vdots \\
 w_{d,1} & \cdots & w_{d,d} 
 \end{pmatrix},  \quad w_{i,j}\in \mathcal{M}(\Omega) \quad \forall i,j=1,\ldots,d,
\end{equation}
and the mixed $\mathcal{M}-S_p$ norm is defined as  
\begin{equation} \label{Eq:MS1}
\|\mathbf{W}\|_{\mathcal{M},S_p} \eqdef \left\|  \begin{pmatrix}
 \|w_{1,1}\|_{\mathcal{M}} & \cdots & \|w_{1,d}\|_{\mathcal{M}}  \\
\vdots & \ddots & \vdots \\
 \|w_{d,1}\|_{\mathcal{M}} & \cdots & \|w_{d,d}\|_{\mathcal{M}} 
 \end{pmatrix} \right\|_{S_p}.
\end{equation}
The duality product $\langle \cdot,\cdot\rangle : \mathcal{M}(\Omega;\mathbb{R}^{d\times d})\times \mathcal{C}_0(\Omega;\mathbb{R}^{d\times d})\rightarrow \mathbb{R}$ is then defined as 
\begin{equation}
\langle\mathbf{W},\mathbf{F}\rangle \eqdef \sum_{i=1}^d \sum_{j=1}^d \langle w_{i,j} , f_{i,j} \rangle. 
\end{equation}
Finally, we denote by $L_1(\Omega;\mathbb{R}^{d\times d})$, $\mathcal{D}(\Omega;\mathbb{R}^{d\times d})$,  and $\mathcal{D}'(\Omega;\mathbb{R}^{d\times d})$, the matrix-valued generalizations of the spaces  $L_1(\Omega)$, $\mathcal{D}(\Omega)$ and $\mathcal{D}'(\Omega)$, respectively.  

\subsection{Generalized Hessian Operator}
The operators $\frac{\partial^2 f}{\partial x_i\partial x_j}:\mathcal{D}'(\Omega)\rightarrow \mathcal{D}'(\Omega)$ are viewed as second-order weak partial derivatives. More precisely, for any $i,j=1,\ldots,d$ and any $w\in \mathcal{D}'(\Omega)$ , the tempered distribution $\frac{\partial^2 w}{\partial x_i \partial x_j}\{w\}\in\mathcal{D}'(\Omega)$   is defined as
$$ \left\langle\frac{\partial^2 }{\partial x_i \partial x_j} w , \varphi \right\rangle=\left\langle w ,\frac{\partial^2  }{\partial x_i \partial x_j}\varphi\right\rangle,$$
for all test functions $\varphi \in \mathcal{D}(\Omega)$. This leads to the following definition of the generalized Hessian operator over the space of tempered distributions. 
\begin{definition}
The Hessian operator $\mathrm{H}:\mathcal{D}'(\Omega)\rightarrow\mathcal{D}'(\Omega;\mathbb{R}^{d\times d})$ is defined as 
\begin{equation}
\mathrm{H}\{f \}= \begin{pmatrix} \frac{\partial^2 f}{\partial x_1^2} & \cdots & \frac{\partial^2 f}{\partial x_1 \partial x_d} \\ \vdots & \ddots & \vdots \\ \frac{\partial^2 f}{\partial x_d \partial x_1}  & \cdots & \frac{\partial^2 f}{\partial x_d^2}   \end{pmatrix}.
\end{equation}
\end{definition}

\section{The Hessian-Schatten Total Variation}\label{sec:HTV}
In order to properly define the HTV seminorm, we start by introducing a novel class of mixed norms over $\mathcal{C}_0(\Omega;\mathbb{R}^{d\times d})$.
\begin{definition} \label{Def:SinfLinf}
Let $q\in[1,+\infty]$. For any $\mathbf{F}\in\mathcal{C}_0(\Omega;\mathbb{R}^{d\times d})$, the $S_{q}-L_{\infty}$ mixed-norm  is defined as
\begin{equation}\label{Eq:SinfLinf}
\|\mathbf{F}\|_{S_{q},L_\infty} =\sup_{\boldsymbol{x}\in\Omega} \|\mathbf{F}(\boldsymbol{x})\|_{S_{q}}. 
\end{equation}
\end{definition}
In Section \ref{Subsec:MValuedSpc},  we highlighted that the dual norm of $L_{\infty}-S_q$ mixed-norm is $\mathcal{M}-S_p$, which is defined over matrix-valued Radon measures.  In Definition \ref{Def:SinfLinf}, we switched the order of  application of the individual norms; however, the two norms induce the same topology over the space   $\mathcal{C}_0(\Omega;\mathbb{R}^{d\times d})$.
\begin{theorem}\label{Thm:SinfLinf}
Regarding the mixed norms defined in Definitions \ref{Def:LinfSinf} and \ref{Def:SinfLinf}
\begin{enumerate}
\item The functional $\mathbf{F}\mapsto \|\mathbf{F}\|_{S_{q},L_\infty}$ is a well-defined (finite) norm over  $\mathcal{C}_0(\Omega;\mathbb{R}^{d\times d})$. 
\item The $L_{\infty}-S_{q}$  and  the $S_{q}-L_{\infty}$ mixed norms are equivalent,  in the sense that there exists positive constants $A,B>0$ such that,  for all ${\bf F}\in \mathcal{C}_0(\Omega;\mathbb{R}^{d\times d})$, we have that 
\begin{equation}\label{Ineq:NormEqv}
A \|{\bf F}\|_{S_q,L_\infty} \leq  \|{\bf F}\|_{L_\infty,S_q}  \leq B \|{\bf F}\|_{S_q,L_\infty}. 
\end{equation}
\item The normed space  $\left(\mathcal{C}_0(\Omega;\mathbb{R}^{d\times d}),\|\cdot\|_{S_{q},L_\infty}\right)$    is a   {\it bona fide} Banach space.
\end{enumerate}
\end{theorem}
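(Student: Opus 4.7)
The plan is to dispatch the three claims in order, using only the entry-wise structure of matrix-valued continuous functions and the fact that, on the finite-dimensional space $\mathbb{R}^{d\times d}$, every norm is equivalent to every other norm up to explicit constants depending only on $d$ and $q$.

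For part (1), I would verify the norm axioms pointwise and then take the supremum. Finiteness follows because, for $\mathbf{F}\in\mathcal{C}_0(\Omega;\mathbb{R}^{d\times d})$, each scalar entry $f_{i,j}\in\mathcal{C}_0(\Omega)$ is bounded; hence the singular values of $\mathbf{F}(\boldsymbol{x})$ are uniformly bounded in $\boldsymbol{x}$ by a quantity controlled by $\max_{i,j}\|f_{i,j}\|_{L_\infty}$. Absolute homogeneity and the triangle inequality transfer directly from the Schatten-$q$ norm since the supremum preserves both. Definiteness is immediate: $\sup_{\boldsymbol{x}}\|\mathbf{F}(\boldsymbol{x})\|_{S_q}=0$ forces $\mathbf{F}(\boldsymbol{x})=\mathbf{0}$ for every $\boldsymbol{x}\in\Omega$.

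For part (2), the key observation is that on $\mathbb{R}^{d\times d}$ the Schatten-$q$ norm and the entry-wise sup norm $\|\mathbf{A}\|_{\infty,\infty}\eqdef\max_{i,j}|A_{i,j}|$ are equivalent, with constants $c_1,c_2>0$ depending only on $d$ and $q$, so $c_1\|\mathbf{A}\|_{\infty,\infty}\le\|\mathbf{A}\|_{S_q}\le c_2\|\mathbf{A}\|_{\infty,\infty}$. For the upper bound in \eqref{Ineq:NormEqv}, I fix $\boldsymbol{x}\in\Omega$, apply the equivalence to $\mathbf{F}(\boldsymbol{x})$, use $|f_{i,j}(\boldsymbol{x})|\le\|f_{i,j}\|_{L_\infty}$ for each $i,j$, and then apply the equivalence again to the matrix $[\|f_{i,j}\|_{L_\infty}]$; taking the supremum over $\boldsymbol{x}$ yields $\|\mathbf{F}\|_{S_q,L_\infty}\le(c_2/c_1)\|\mathbf{F}\|_{L_\infty,S_q}$. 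For the lower bound, I use $|f_{i,j}(\boldsymbol{x})|=|\boldsymbol{e}_i^\top\mathbf{F}(\boldsymbol{x})\boldsymbol{e}_j|\le\|\mathbf{F}(\boldsymbol{x})\|_{S_\infty}\le\|\mathbf{F}(\boldsymbol{x})\|_{S_q}$ (valid for all $q\in[1,\infty]$); passing to the supremum gives $\|f_{i,j}\|_{L_\infty}\le\|\mathbf{F}\|_{S_q,L_\infty}$ entry-wise, after which the Schatten-$q$ norm of $[\|f_{i,j}\|_{L_\infty}]$ is controlled by $c_2\|\mathbf{F}\|_{S_q,L_\infty}$.

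For part (3), completeness is then free: since $\left(\mathcal{C}_0(\Omega;\mathbb{R}^{d\times d}),\|\cdot\|_{L_\infty,S_q}\right)$ was already identified as a Banach space in Section \ref{Subsec:MValuedSpc}, the equivalence \eqref{Ineq:NormEqv} established in part (2) transfers Cauchy sequences and their limits between the two norms, so $\left(\mathcal{C}_0(\Omega;\mathbb{R}^{d\times d}),\|\cdot\|_{S_q,L_\infty}\right)$ inherits completeness.

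The only place that requires any care is part (2), and specifically ensuring that the constants $c_1,c_2$ can be chosen to hold uniformly for every matrix in $\mathbb{R}^{d\times d}$; this is a finite-dimensional fact and can either be quoted or made explicit via $\|\mathbf{A}\|_{S_q}\le d\,\|\mathbf{A}\|_{S_\infty}$ combined with $\|\mathbf{A}\|_{S_\infty}\le d\max_{i,j}|A_{i,j}|$ and the elementary reverse bound $\max_{i,j}|A_{i,j}|\le\|\mathbf{A}\|_{S_\infty}\le\|\mathbf{A}\|_{S_q}$. I do not expect any genuine obstacle; the result is essentially a transcription of the finite-dimensional norm equivalence to the matrix-valued function setting.
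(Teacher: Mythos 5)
Your proposal is correct and follows essentially the same route as the paper: both arguments reduce everything to the equivalence of norms on the finite-dimensional space $\mathbb{R}^{d\times d}$ (you use the entry-wise max norm where the paper uses the entry-wise sum norm, an immaterial choice), interchange suprema with the entry-wise bounds to get the two-sided inequality \eqref{Ineq:NormEqv}, and then transfer completeness from the $L_\infty$-$S_q$ norm. The only quibble is a labeling slip in part (2) --- the chain ending in $\|\mathbf{F}\|_{S_q,L_\infty}\le(c_2/c_1)\|\mathbf{F}\|_{L_\infty,S_q}$ is the \emph{left-hand} inequality of \eqref{Ineq:NormEqv}, not the upper bound --- but the mathematics on both sides is sound.
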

The proof is available in Appendix \ref{App:SinfLinf}. Using the outcomes of Theorem \ref{Thm:SinfLinf} and, in particular, Item 3, we are now ready to introduce the $S_p-\mathcal{M}$ mixed norm   defined over the space of matrix-valued Radon measures. 
\begin{definition}\label{Def:RadonSchatten} 
For any matrix-valued Radon measure $\mathbf{W}\in \mathcal{M}  (\Omega,\mathbb{R}^{d\times d})$, the $S_p-\mathcal{M}$ mixed-norm is defined as 
\begin{equation}\label{Eq:RadonSchattenNorm}
\|\mathbf{W}\|_{S_p,\mathcal{M}} \eqdef  \sup \left\{ \langle \mathbf{W},\mathbf{F}\rangle:  \mathbf{F}\in  \mathcal{C}_0(\Omega;\mathbb{R}^{d\times d}), \|\mathbf{F}\|_{S_{q},L_\infty}=1 \right\}.
\end{equation}
\end{definition}

Intuitively, the $S_p-\mathcal{M}$ norm of a matrix-valued function ${\bf F}:\Omega\rightarrow\mathbb{R}^{d\times d}$ is equal to the total-variation norm of the function $\boldsymbol{x}\mapsto \|{\bf F}(\boldsymbol{x})\|_{S_q}$. However, this intuition cannot directly lead to a general definition because the space $\mathcal{M}(\Omega;\mathbb{R}^{d\times d})$ contains elements that do not have a pointwise definition. We are therefore forced to  define this norm by duality,  as opposed to the $ \mathcal{M}-S_p$ norm given in \eqref{Eq:MS1}. 

We also remark that, due to the dense embedding $\mathcal{D}(\Omega;\mathbb{R}^{d\times d})\hookrightarrow \mathcal{C}_0(\Omega;\mathbb{R}^{d\times d})$, one can alternatively express the $S_p-\mathcal{M}$ norm as 
\begin{equation} 
\|\mathbf{W}\|_{S_p,\mathcal{M}} = \sup \left\{ \langle \mathbf{W},\mathbf{F}\rangle:  \mathbf{F}\in  \mathcal{D}(\Omega;\mathbb{R}^{d\times d}), \|\mathbf{F}\|_{S_{q},L_\infty}=1 \right\},
\end{equation}
which is well-defined for all matrix-valued tempered distributions. However, the only elements of $\mathcal{D}'(\Omega;\mathbb{R}^{d\times d})$ of finite $S_p-\mathcal{M}$  norm are precisely the matrix-valued finite Radon measures. In other words, $\mathcal{M}  (\Omega,\mathbb{R}^{d\times d})$ is the largest subspace of $\mathcal{D}'(\Omega;\mathbb{R}^{d\times d})$ with finite $S_p-\mathcal{M}$ norm.

 In what follows, we strengthen the intuition behind the $S_p-\mathcal{M}$ norm by computing  it for two general classes of functions/distributions in $\mathcal{M}  (\Omega,\mathbb{R}^{d\times d})$ that are particularly important in our framework: the absolutely integrable matrix-valued functions and the Dirac fence distributions. 
\begin{definition}\label{Def:DiracFence}
For any nonzero matrix $\mathbf{A}\in\mathbb{R}^{d\times d}$, any convex compact set $C \subset\mathbb{R}^{d_1}$ with $d_1<d$, and any measurable transformation $\mathbf{T}:\mathbb{R}^{d_1}\rightarrow\mathbb{R}^{d-d_1}$ (not necessarily linear) such that $C \times \mathbf{T}(C) \subseteq \Omega$, we define the corresponding Dirac fence $\mathbf{D}\in \mathcal{M}(\Omega;\mathbb{R}^{d\times d})$ as 
\begin{equation}\label{Eq:WformDirac}
\mathbf{D}(\boldsymbol{x}_1,\boldsymbol{x}_2) = \mathbf{A}\mathbbm{1}_{\boldsymbol{x}_1\in C}\delta(\boldsymbol{x}_2 - \mathbf{T}\boldsymbol{x}_1\}), \quad \boldsymbol{x}_1 \in \mathbb{R}^{d_1}, \boldsymbol{x}_2 \in \mathbb{R}^{d-d_1}, (\boldsymbol{x}_1,\boldsymbol{x}_2) \in \Omega.
\end{equation}
\end{definition}
  Dirac fence distributions are natural generalizations of  {\it the Dirac impulse} to nonlinear (and bounded) manifolds \cite{onural2006impulse}. More precisely, for any test function $\mathbf{F}\in \mathcal{C}_0(\Omega;\mathbb{R}^{d\times d})$ and any Dirac fence ${\bf D}$ of the form \eqref{Eq:WformDirac}, we have that 
\begin{equation}
\langle {\bf D} , {\bf F} \rangle = \int_{C} {\rm Tr}\left({\bf A}^T{\bf F}(\boldsymbol{x}_1,{\bf T}\boldsymbol{x}_1)\right) {\rm d}\boldsymbol{x}_1 \in \mathbb{R}.
\end{equation}
Intuitively, this corresponds to considering a ``continuum'' of low-dimensional Dirac impulses on the $d_1$-dimensional compact manifold $C \times \mathbf{T}(C)$  that is embedded in $\Omega$, as illustrated in Figure \ref{Fig:DiracDist}.

\begin{figure}[t]
\begin{minipage}{1.0\linewidth}
  \centering
  \centerline{\includegraphics[width=0.7\linewidth]{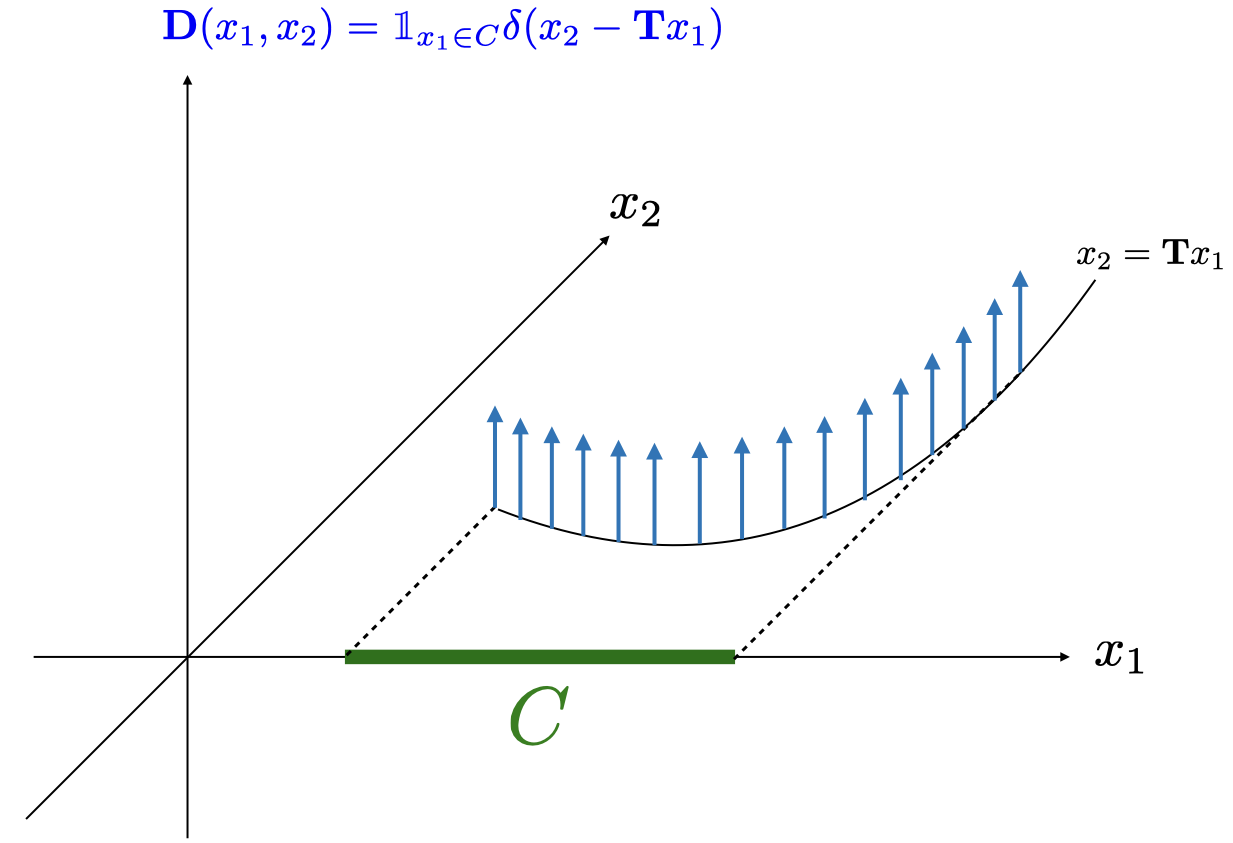}}%  \vspace{2.0cm}
  \caption{  Illustration of a Dirac fence with $d=2$ and $d_1=1$.}
  \label{Fig:DiracDist} \medskip
\end{minipage}
\end{figure}

\begin{theorem}\label{Thm:RS1} 
Let $p\in[1,+\infty)$. 
\begin{enumerate}
\item   For any  matrix-valued function $\mathbf{W} \in L_1(\Omega,\mathbb{R}^{d\times d})\subseteq \mathcal{M}(\Omega;\mathbb{R}^{d\times d})$,  we have that
\begin{equation}\label{Eq:DualNormEqForm}
\|\mathbf{W}\|_{S_p,\mathcal{M}}=  \big\| \left\|\mathbf{W}(\cdot)\right\|_{S_p}  \big\|_{L_1}= \int_{\Omega} \left(\sum_{i=1}^d \left|\sigma_i\left( \mathbf{W}(\boldsymbol{x})\right)\right|^p \right)^{\frac{1}{p}}\mathrm{d}\boldsymbol{x}.
\end{equation}
\item  For any Dirac fence distribution ${\bf D}$ of the form \eqref{Eq:WformDirac}, we have that     
\begin{equation}
\|\mathbf{D} \|_{S_p,\mathcal{M}} = \|\mathbf{A}\|_{S_p} \mathrm{Leb}(C),
\end{equation}
where $\mathrm{Leb}(C)$ denotes the Lebesgue measure of $C\subseteq \mathbb{R}^{d_1}$.
\item Consider two Dirac fences ${\bf D}_1$ and ${\bf D}_2$ of the form 
$$\mathbf{D}_i(\boldsymbol{x}_1,\boldsymbol{x}_2) = \mathbf{A}_i\mathbbm{1}_{\boldsymbol{x}_1\in C_i}\delta(\boldsymbol{x}_2 - \mathbf{T}_i\boldsymbol{x}_1\}), \quad i=1,2$$
and assume that the ``intersection'' of the two fences is of measure zero,  in the sense that $C_0=\{\boldsymbol{x}_1\in C_1 \cap C_2: {\bf T}_1\boldsymbol{x}_1={\bf T}_2\boldsymbol{x}_1\}$ is a  subset of $\mathbb{R}^{d_1}$ whose Lebesgue measure is zero.  Then, we have that 
\begin{equation}
\|\mathbf{D}_1+\mathbf{D}_2 \|_{S_p,\mathcal{M}} =\|\mathbf{D}_1\|_{S_p,\mathcal{M}}+ \|\mathbf{D}_2 \|_{S_p,\mathcal{M}}.
\end{equation}
\end{enumerate}
\end{theorem}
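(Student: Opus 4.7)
The three items are all dual-norm computations against continuous matrix fields in the sense of Definition~\ref{Def:RadonSchatten}. In each, an upper bound will follow by applying the Schatten H\"older inequality \eqref{Eq:Holder} pointwise and then integrating, while the matching lower bound will be delivered by constructing a near-optimal sequence $\mathbf{F}_n\in\mathcal{C}_0(\Omega;\mathbb{R}^{d\times d})$ with $\|\mathbf{F}_n\|_{S_q,L_\infty}\leq 1$ and tracking $\langle\mathbf{W},\mathbf{F}_n\rangle$.

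\textbf{Item~1.} For $\mathbf{W}\in L_1$, the pairing reduces to $\int_\Omega \mathrm{Tr}(\mathbf{W}(\boldsymbol{x})^T\mathbf{F}(\boldsymbol{x}))\,\mathrm{d}\boldsymbol{x}$; a pointwise application of \eqref{Eq:Holder} followed by pulling out $\|\mathbf{F}\|_{S_q,L_\infty}$ yields the upper bound $\int_\Omega\|\mathbf{W}(\boldsymbol{x})\|_{S_p}\,\mathrm{d}\boldsymbol{x}$. For the matching lower bound, I would use the pointwise SVD $\mathbf{W}(\boldsymbol{x})=\mathbf{U}(\boldsymbol{x})\boldsymbol{\Sigma}(\boldsymbol{x})\mathbf{V}(\boldsymbol{x})^T$ to define the measurable dual field $\mathbf{F}^\star(\boldsymbol{x})=\mathbf{U}(\boldsymbol{x})\boldsymbol{\Sigma}^\star(\boldsymbol{x})\mathbf{V}(\boldsymbol{x})^T$, with singular values $\sigma_i^{p-1}/\|\mathbf{W}(\boldsymbol{x})\|_{S_p}^{p-1}$, so that $\|\mathbf{F}^\star(\boldsymbol{x})\|_{S_q}=1$ and H\"older is saturated pointwise. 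Since $\mathbf{F}^\star$ is only measurable, I would approximate it by compactly supported continuous $\mathbf{F}_n$ via standard truncation and mollification and pass to the limit using dominated convergence powered by $\mathbf{W}\in L_1$.

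\textbf{Item~2.} The pairing is now $\int_C \mathrm{Tr}(\mathbf{A}^T\mathbf{F}(\boldsymbol{x}_1,\mathbf{T}\boldsymbol{x}_1))\,\mathrm{d}\boldsymbol{x}_1$, and \eqref{Eq:Holder} applied at the point $(\boldsymbol{x}_1,\mathbf{T}\boldsymbol{x}_1)$ followed by the $L_\infty$ bound on $\mathbf{F}$ yields the upper bound $\|\mathbf{A}\|_{S_p}\mathrm{Leb}(C)$. For the lower bound I would fix a dual matrix $\mathbf{A}^\star$ with $\|\mathbf{A}^\star\|_{S_q}=1$ and $\mathrm{Tr}(\mathbf{A}^T\mathbf{A}^\star)=\|\mathbf{A}\|_{S_p}$, constructed from the SVD of $\mathbf{A}$, and then form a continuous tube field $\mathbf{F}_n(\boldsymbol{x}_1,\boldsymbol{x}_2)=\mathbf{A}^\star\chi_n(\boldsymbol{x}_1)\phi_{\delta_n}(\boldsymbol{x}_2-\tilde{\mathbf{T}}_n\boldsymbol{x}_1)$, in which $\tilde{\mathbf{T}}_n$ is a continuous function (obtained by Lusin's theorem followed by Tietze extension) that agrees with $\mathbf{T}$ on a compact $C_n\subseteq C$ with $\mathrm{Leb}(C\setminus C_n)\to 0$, $\chi_n\in\mathcal{D}(\mathbb{R}^{d_1})$ approximates $\mathbbm{1}_C$ from below with $|\chi_n|\leq 1$, and $\phi_{\delta_n}\in\mathcal{D}(\mathbb{R}^{d-d_1})$ is a unit bump of radius $\delta_n$ peaked at the origin. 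The bounds $|\chi_n|,|\phi_{\delta_n}|\leq 1$ enforce $\|\mathbf{F}_n\|_{S_q,L_\infty}\leq 1$; on $C_n$ the integrand collapses to $\|\mathbf{A}\|_{S_p}\chi_n(\boldsymbol{x}_1)$ because $\phi_{\delta_n}(0)=1$, while the contribution from $C\setminus C_n$ is controlled in absolute value by $\|\mathbf{A}\|_{S_p}\mathrm{Leb}(C\setminus C_n)\to 0$.

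\textbf{Item~3 and main obstacle.} The triangle inequality supplies one direction of additivity. For the reverse, I would run the Item~2 construction separately for each fence, producing $\mathbf{F}_n^{(1)},\mathbf{F}_n^{(2)}$, and then observe that after excising an open neighborhood $U_\epsilon\supseteq C_0$ with $\mathrm{Leb}(U_\epsilon)<\epsilon$, the continuous extensions $\tilde{\mathbf{T}}_1,\tilde{\mathbf{T}}_2$ differ on the compact complement by a uniform positive gap; choosing $\delta_n$ below this gap forces the tubes---hence the supports of $\mathbf{F}_n^{(1)}$ and $\mathbf{F}_n^{(2)}$---to be disjoint, so that $\|\mathbf{F}_n^{(1)}+\mathbf{F}_n^{(2)}\|_{S_q,L_\infty}\leq 1$ and the cross pairings $\langle\mathbf{D}_i,\mathbf{F}_n^{(j)}\rangle$ with $i\neq j$ vanish in the iterated limit $n\to\infty$ followed by $\epsilon\to 0$. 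The principal technical difficulty common to the lower bounds of Items~2 and~3 is the mismatch between the support of $\mathbf{D}$, which lives on the graph of a merely \emph{measurable} transformation, and the \emph{continuous} test fields against which the dual norm is computed; rigorously coupling the Lusin--Tietze approximation of $\mathbf{T}$ with the tube-smoothing parameter $\delta_n$, while controlling the error contributed by the exceptional set $C\setminus C_n$ and by the finite tube width, is the delicate step of the argument.
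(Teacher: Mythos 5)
Your overall strategy coincides with the paper's: in each item the upper bound comes from the pointwise Schatten--H\"older inequality and the lower bound from an explicit near-maximizing continuous test field built from dual (conjugate) matrices. The differences are in the approximation machinery. For Item 1, the paper takes the measurable field $\mathbf{F}^\star=\mathrm{J}_{S_p,\mathrm{rank}}\left(\mathbf{W}(\cdot)\right)/\|\mathbf{W}(\cdot)\|_{S_p}$ and invokes Lusin's theorem to replace it by an element of $\mathcal{C}_0(\Omega;\mathbb{R}^{d\times d})$ in the unit $S_q$--$L_\infty$ ball; your mollification-plus-dominated-convergence route is an acceptable substitute (averaging against a probability kernel preserves the pointwise $S_q$ bound by convexity), but you should not treat the measurability of your SVD-based $\mathbf{F}^\star$ as automatic: the SVD is not unique, so a measurable selection is required (the paper outsources exactly this point to a cited result on the duality mapping $\mathrm{J}_{S_p,\mathrm{rank}}$), and at $p=1$ the exponent $p-1=0$ must be read as the rank-restricted conjugate. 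For Item 2 your tube construction around a Lusin--Tietze extension of $\mathbf{T}$ is correct but heavier than necessary: since a single fence needs only one dual matrix, the constant field $\mathbf{F}\equiv\mathbf{A}^\star$ (suitably cut off to lie in $\mathcal{C}_0$) already saturates the bound, with no need to localize near the graph of the merely measurable map $\mathbf{T}$; this is what the paper does. The localization genuinely matters only in Item 3, where the test field must take the two different values $\mathbf{A}_1^\star$ and $\mathbf{A}_2^\star$ on the two fences; there your plan matches the paper's (excise a small-measure neighborhood of $C_0$ and exploit separation of the fences on the complement), and you in fact supply more detail than the paper's rather terse existence claim. One point to tighten: for the supports of $\mathbf{F}_n^{(1)}$ and $\mathbf{F}_n^{(2)}$ to be disjoint you need the cutoffs $\chi_n^{(i)}$ to be supported in the compact sets $C_n^{(i)}\setminus U_\epsilon$ on which the uniform gap between $\tilde{\mathbf{T}}_1$ and $\tilde{\mathbf{T}}_2$ actually holds, not merely in $C_i$; otherwise the tubes may overlap above $U_\epsilon$ or above the Lusin exceptional sets and the normalization $\|\mathbf{F}_n^{(1)}+\mathbf{F}_n^{(2)}\|_{S_q,L_\infty}\leq 1$ can fail. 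With that adjustment the argument closes.
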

The proof can be found in Appendix \ref{App:RS1}. We are now ready to define the HTV seminorm. 
 \begin{definition}\label{Def:RSH}
Let $p\in [1,+\infty]$. The Hessian-Schatten total variation  of any $f\in\mathcal{D}'(\Omega)$ is defined as 
\begin{equation}\label{Eq:RSH}
{\rm HTV}_p(f) = \|{\rm H}\{f\}\|_{S_p,\mathcal{M}} = \sup \left\{ \langle {\rm H}\{f\} ,\mathbf{F}\rangle:  \mathbf{F}\in  \mathcal{D}(\Omega;\mathbb{R}^{d\times d}), \|\mathbf{F}\|_{S_{q},L_\infty}=1 \right\},
\end{equation}
where $q\in [1,+\infty]$ is the H\"older conjugate of $p$ with $\frac{1}{p}+\frac{1}{q}=1$. 
 \end{definition}
We remark that the case $p=2$ has been previously studied in the context of the  space of functions with bounded Hessian 
\cite{demengel1984fonctions,hinterberger2006variational,bredies2010total,bergounioux2010second}. In our work, we complement their theoretical findings by extending the definition of the HTV to all Schatten norms with arbitrary value of $p\in [1,+\infty]$.  We now prove some desirable properties of the HTV functional. The proofs are to be found in Appendix \ref{App:AI}.
 \begin{theorem}\label{Thm:AI}
The HTV seminorm satisfies the following properties. 
\begin{enumerate}
%\item {\bf Semi-norm structure:} The RSH functional is a semi-norm over $\mathcal{M}_{\mathrm{H}}(\Omega)$. 
 \item {\bf Null Space}: A tempered distribution has a vanishing HTV  if and only if  it can be identified as an affine function. In other words, we have that 
$$ \mathcal{N}_{\mathrm{HTV}_p}(\Omega) = \{f\in \mathcal{D}'(\Omega): {\rm HTV}_p(f) = 0\}= \{ \boldsymbol{x}\mapsto\boldsymbol{a}^T\boldsymbol{x} + b: \boldsymbol{a}\in\mathbb{R}^d, b\in\mathbb{R}\}.$$
% The set  $\mathcal{N}_{\mathrm{HTV}}(\Omega)$ is a vector space of dimension $(d+1)$. In particular, one can see that $\mathcal{N}_{\mathrm{HTV}}(\Omega) = {\rm span}(p_0,\ldots,p_d)$, where $p_0 = 1$ and $p_l(x_1,\ldots,x_d) = x_l$ for $l=1,\ldots,d$. 
 
\item {\bf Invariance}: Let $\Omega= \mathbb{R}^d$. For any $f\in \mathcal{D}'(\mathbb{R}^d)$, we have that 
\begin{align*}
&\mathrm{HTV}_p\left( f(\cdot - \boldsymbol{x}_0)  \right) = \mathrm{HTV}_p\left(f\right), & \forall \boldsymbol{x}_0 \in \mathbb{R}^d, \\ 
&\mathrm{HTV}_p\left( f(\alpha \cdot )  \right) = |\alpha|^{2-d} \mathrm{HTV}_p\left(f\right), & \forall \alpha\in \mathbb{R},\\
&\mathrm{HTV}_p\left( f({\bf U} \cdot )  \right) = \mathrm{HTV}_p\left(f\right),  &\forall {\bf U}\in \mathbb{R}^{d\times d}: \text{Orthonormal}.
\end{align*}
\end{enumerate} 
\end{theorem}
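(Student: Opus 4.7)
My plan is to use a double-inclusion argument. The easy inclusion is that if $f(\boldsymbol{x}) = \boldsymbol{a}^T\boldsymbol{x} + b$, then $\mathrm{H}\{f\} = 0$ pointwise and so the supremum in \eqref{Eq:RSH} trivially vanishes. For the converse, $\mathrm{HTV}_p(f) = 0$ means that $\langle \mathrm{H}\{f\}, \mathbf{F}\rangle \le 0$ for every $\mathbf{F} \in \mathcal{D}(\Omega;\mathbb{R}^{d\times d})$ with $\|\mathbf{F}\|_{S_q,L_\infty}=1$; by symmetry of the feasible set under $\mathbf{F}\mapsto -\mathbf{F}$ and by homogeneity, this forces $\langle \mathrm{H}\{f\}, \mathbf{F}\rangle = 0$ for every $\mathbf{F} \in \mathcal{D}(\Omega;\mathbb{R}^{d\times d})$. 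Unpacking the definition of $\mathrm{H}$ entry-wise, this is equivalent to $\partial^2 f /\partial x_i \partial x_j = 0$ in $\mathcal{D}'(\Omega)$ for all $i,j$. The classical fact that a distribution on a connected open set with vanishing gradient is a constant, applied twice, then identifies $f$ with an affine function. Since $\Omega$ is taken to be an open ball (hence connected), this step goes through.

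\textbf{Invariance---strategy.} The common approach for all three identities is to push the transformation from $f$ over to the dual test field $\mathbf{F}$ through the duality pairing \eqref{Eq:RSH}. For a diffeomorphism $T:\mathbb{R}^d \to \mathbb{R}^d$, I would (i) apply the chain rule to express $\mathrm{H}\{f \circ T\}$ as a pointwise linear transformation of $\mathrm{H}\{f\} \circ T$, (ii) substitute $\boldsymbol{y} = T\boldsymbol{x}$ in the integral pairing picking up a Jacobian, and (iii) absorb the remaining operations into a new test field $\tilde{\mathbf{F}}$ so that $\langle \mathrm{H}\{f\circ T\}, \mathbf{F}\rangle = c(T)\,\langle \mathrm{H}\{f\},\tilde{\mathbf{F}}\rangle$ with $\|\tilde{\mathbf{F}}\|_{S_q,L_\infty} = \|\mathbf{F}\|_{S_q,L_\infty}$. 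Taking suprema then yields $\mathrm{HTV}_p(f\circ T) = |c(T)|\,\mathrm{HTV}_p(f)$, and to obtain the reverse inequality one uses that $T$ is invertible and the map $\mathbf{F}\mapsto \tilde{\mathbf{F}}$ is a bijection of $\mathcal{D}(\mathbb{R}^d;\mathbb{R}^{d\times d})$ onto itself.

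\textbf{Case-by-case computations.} For translations $T\boldsymbol{x} = \boldsymbol{x}-\boldsymbol{x}_0$, the Hessian simply translates, the Jacobian is $1$, and $\tilde{\mathbf{F}}(\boldsymbol{y}) = \mathbf{F}(\boldsymbol{y}+\boldsymbol{x}_0)$, whose $S_q$-$L_\infty$ norm is manifestly unchanged, so $c = 1$. For the dilation $T\boldsymbol{x} = \alpha\boldsymbol{x}$, the chain rule gives $\mathrm{H}\{f(\alpha\,\cdot)\}(\boldsymbol{x}) = \alpha^2 \mathrm{H}\{f\}(\alpha\boldsymbol{x})$ and the change of variables contributes $|\alpha|^{-d}$, producing the prefactor $|\alpha|^{2-d}$. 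For an orthonormal $\mathbf{U}$, the chain rule yields $\mathrm{H}\{f(\mathbf{U}\,\cdot)\}(\boldsymbol{x}) = \mathbf{U}^T \mathrm{H}\{f\}(\mathbf{U}\boldsymbol{x})\,\mathbf{U}$; using the cyclicity of the trace and $|\det \mathbf{U}|=1$, the relevant test field becomes $\tilde{\mathbf{F}}(\boldsymbol{y}) = \mathbf{U}\mathbf{F}(\mathbf{U}^T\boldsymbol{y})\mathbf{U}^T$, and the factor $c$ equals $1$.

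\textbf{Main obstacle.} The delicate point---and the one that truly uses the structure of the HTV---is verifying that $\|\tilde{\mathbf{F}}\|_{S_q,L_\infty} = \|\mathbf{F}\|_{S_q,L_\infty}$ in the rotational case. This hinges on the \emph{unitary invariance} of the Schatten norm: the singular values of $\mathbf{U}\mathbf{F}(\boldsymbol{x})\mathbf{U}^T$ equal those of $\mathbf{F}(\boldsymbol{x})$. Note that the analogous statement would \emph{fail} for the $L_\infty$-$S_q$ norm of Definition~\ref{Def:LinfSinf} because the outer Schatten norm then acts on a real matrix of componentwise $L_\infty$ norms rather than on $\mathbf{F}(\boldsymbol{x})$ itself; it is therefore essential to carry out this argument with the $S_q$-$L_\infty$ norm, in accordance with Definition~\ref{Def:RadonSchatten}. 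A secondary bookkeeping issue is that $f$ is only a distribution, but this is harmless: composition with a smooth diffeomorphism and the chain-rule identities transfer to $\mathcal{D}'(\Omega)$ by the usual transposition argument, and all the substitutions above can be formally carried out against a test field in $\mathcal{D}(\Omega;\mathbb{R}^{d\times d})$.
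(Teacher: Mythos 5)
Your proposal is correct and, for the invariance properties, follows essentially the same route as the paper: push the transformation onto the test field through the duality pairing, track the chain-rule factor and the Jacobian, and observe that the unitary invariance of the Schatten norm (applied pointwise, which is exactly why the $S_q$-$L_\infty$ norm is the right one here) preserves the constraint set. The only genuine divergence is in the converse direction of the null-space claim. The paper deduces from $\frac{\partial^2 f}{\partial x_i^2}=0$ that $f$ is a multivariate polynomial by citing a proposition on the null space of $\frac{\partial^2}{\partial x_1^2}$, and then peels off the affine part by induction on the variables. You instead use all the entries $\frac{\partial^2 f}{\partial x_i \partial x_j}=0$ together with the classical fact that a distribution with vanishing gradient on a connected open set is constant, applied first to each $\frac{\partial f}{\partial x_j}$ and then to $f$ minus its linear part. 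Your version is more self-contained (no external proposition about polynomial null spaces is needed) and uses the connectedness of $\Omega$ explicitly, which is the actual hypothesis doing the work; the paper's version isolates the one-dimensional operator $\frac{\partial^2}{\partial x_1^2}$ and leans on known structure of its null space. Both are sound, and your preliminary reduction from $\mathrm{HTV}_p(f)=0$ to $\mathrm{H}\{f\}=0$ via the symmetry $\mathbf{F}\mapsto-\mathbf{F}$ is a step the paper takes for granted but states nowhere, so making it explicit is a small improvement.
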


\section{Closed-Form Expressions for the HTV of Special Functions} \label{sec:closed}
Although Definition \ref{Def:RSH}  introduces a  formal way to compute the HTV of a given element $f\in \mathcal{D}'(\Omega)$, it is still very abstract and not practical. This is the reason why we now provide closed-form expressions for the HTV of two general classes of functions.
\subsection{Sobolev Functions} 
 Let $W_1^2(\Omega)$ be the Sobolev space of twice-differentiable functions $f:\Omega\rightarrow \mathbb{R}$ whose  second-order partial derivatives are in $L_1(\Omega)$. We note that, for compact domains $\Omega$, this space contains the input-output relation of neural networks with activation functions that are twice-differentiable  almost everywhere ({\it e.g.}, sigmoid \cite{cybenko1989approximation}, Swish \cite{ramachandran2017searching}, Mish \cite{misra2019mish},  GeLU \cite{hendrycks2016gaussian}). 
\begin{proposition}[{\bf Sobolev Compatibility}]\label{Prop:Sobolev}
Let $p\in [1,+\infty]$. Then,  for any Sobolev function $f\in W_1^2(\Omega)$, we have that
$$ \mathrm{HTV}_p(f) = \|{\rm H}\{f\}\|_{S_p,L_1} = \int_{\Omega} \|{\rm H}\{f\}(\boldsymbol{x})\|_{S_p} {\rm d}\boldsymbol{x}.$$
\end{proposition}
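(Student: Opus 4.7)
The plan is to deduce this proposition as an immediate corollary of Item 1 of Theorem~\ref{Thm:RS1}, together with the definition of HTV. The strategy relies on recognizing that, for a Sobolev function, the distributional Hessian is in fact a bona fide matrix-valued $L_1$ function, so the general duality-based definition collapses to an explicit integral.

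First I would unpack the definitions. By the definition of the Sobolev space $W_1^2(\Omega)$, every second-order weak partial derivative $\tfrac{\partial^2 f}{\partial x_i \partial x_j}$ belongs to $L_1(\Omega)$. Using the characterization of matrix-valued Lebesgue spaces from Section \ref{Subsec:MValuedSpc}, this is exactly the statement that $\mathrm{H}\{f\} \in L_1(\Omega;\mathbb{R}^{d\times d})$. In particular, $\mathrm{H}\{f\}$ lies in the larger space $\mathcal{M}(\Omega;\mathbb{R}^{d\times d})$ on which the $S_p$-$\mathcal{M}$ mixed norm is defined.

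Second, I would invoke Definition~\ref{Def:RSH}, which states $\mathrm{HTV}_p(f) = \|\mathrm{H}\{f\}\|_{S_p,\mathcal{M}}$, and then apply Item 1 of Theorem~\ref{Thm:RS1} with $\mathbf{W} = \mathrm{H}\{f\}$. This theorem asserts that, for any $\mathbf{W}\in L_1(\Omega;\mathbb{R}^{d\times d})$,
\begin{equation*}
\|\mathbf{W}\|_{S_p,\mathcal{M}} = \int_{\Omega} \|\mathbf{W}(\boldsymbol{x})\|_{S_p}\, \mathrm{d}\boldsymbol{x}.
\end{equation*}
Substituting $\mathbf{W} = \mathrm{H}\{f\}$ and identifying the right-hand side with $\|\mathrm{H}\{f\}\|_{S_p,L_1}$ gives the claimed formula.

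Since the heavy lifting---establishing the duality-based mixed norm and evaluating it on $L_1$ elements---has already been carried out in Theorem~\ref{Thm:RS1}, there is essentially no obstacle here. The only subtle point worth stating explicitly in the proof is the verification that the generalized (distributional) Hessian of an element of $W_1^2(\Omega)$ indeed coincides with the pointwise-defined matrix of $L_1$ second partial derivatives, which is immediate from the definition of $W_1^2(\Omega)$ via weak derivatives. If the case $p=+\infty$ deserves separate comment (Theorem~\ref{Thm:RS1} is stated for $p\in[1,+\infty)$), I would add a brief remark observing that, for $f\in W_1^2(\Omega)$, the identity can be obtained by a direct duality argument using \eqref{Eq:Holder}: one tests $\mathrm{H}\{f\}$ against any admissible $\mathbf{F}$ with $\|\mathbf{F}\|_{S_1,L_\infty}=1$ and recognizes the supremum as $\int_\Omega \|\mathrm{H}\{f\}(\boldsymbol{x})\|_{S_\infty}\,\mathrm{d}\boldsymbol{x}$.
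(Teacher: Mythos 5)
Your proposal is correct and matches the paper's own proof, which likewise reduces the statement to Item 1 of Theorem~\ref{Thm:RS1} by observing that $\mathrm{H}\{f\}\in L_1(\Omega;\mathbb{R}^{d\times d})$ for $f\in W_1^2(\Omega)$. Your extra remark about the case $p=+\infty$ (which Theorem~\ref{Thm:RS1} formally excludes but the proposition includes) is a legitimate point that the paper silently glosses over, and your suggested duality fix via \eqref{Eq:Holder} is the right way to close it.
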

\begin{proof}
This is a consequence of Theorem \ref{Thm:RS1} since, for any $f\in W_1^2(\Omega)$, the matrix-valued function ${\rm H}\{f\}:\Omega\rightarrow\mathbb{R}^{d\times d}: \boldsymbol{x}\mapsto {\rm H}\{f\}(\boldsymbol{x})$ is measurable and is in $L_1(\Omega;\mathbb{R}^{d\times d})$. 
\end{proof}
Interestingly, Proposition \ref{Prop:Sobolev} demonstrates that our introduced seminorm is a generalization of the Hessian-Schatten regularization that has been used in inverse problems and image reconstruction \cite{Lefki2012Hessian,Lefki2013Hessian}. 
%We remark that among the family of $S_p-\mathcal{M}$ norms, the case $p=1$ is  of  special interest. This is due to the well-known fact that the nuclear norm (a.k.a.\ trace norm) promotes sparsity of the singular values which is at the heart of many algorithms in matrix completion and  low-rank matrix recovery \cite{davenport2016overview,candes2009exact}. 

\subsection{Continuous and Piecewise-Linear Mappings}
A function $f:\Omega\rightarrow \mathbb{R}$ is said to be continuous and piecewise linear  if 
\begin{enumerate}
\item It is continuous. 
\item There exists a finite partitioning $\Omega = P_1 \sqcup P_2 \sqcup \cdots \sqcup P_N$ such that,  for any $n=1,\ldots,N$, $P_n$ is a convex polytope with the property that the restricted function $f\big|_{P_n}$ is an affine mapping of the form $f\big|_{P_n}(\boldsymbol{x})= \boldsymbol{a}_n^T\boldsymbol{x} + b_n$ for all $\boldsymbol{x} \in P_n$.
\end{enumerate}

An example of a CPWL function is shown in Figure \ref{Fig:CPWL1}. Let us highlight that there is an intimate link between CPWL functions and ReLU neural networks.  Indeed, it has been shown that the input-output relation of any feed forward ReLU neural network is a CPWL function \cite{pascanu2013number,Montufar2014}. Moreover, any CPWL function can  be represented {\it exactly} by some ReLU neural network \cite{arora2016understanding}.   
\begin{figure}[t]
\begin{minipage}{1.0\linewidth}
  \centering
  \centerline{\includegraphics[width=0.8\linewidth]{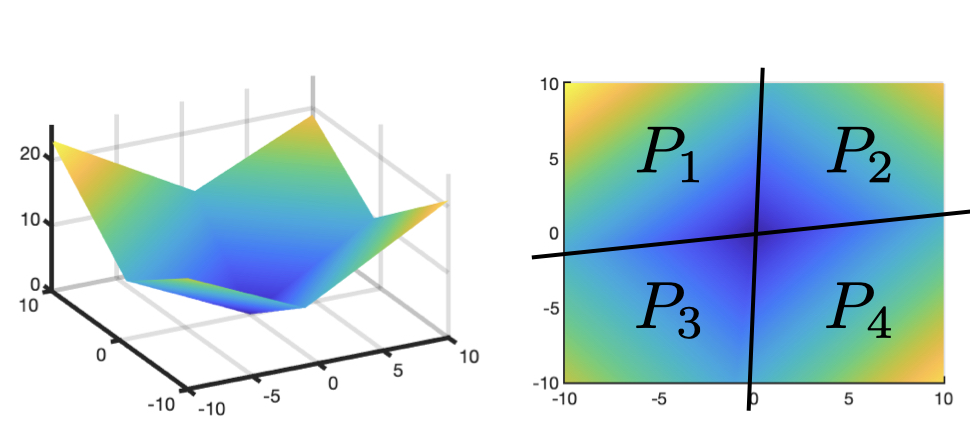}}%  \vspace{2.0cm}
  \caption{  Illustration of a CPWL function $f:\mathbb{R}^2\rightarrow\mathbb{R}$. Left: 3D view. Right: 2D partitioning.}
  \label{Fig:CPWL1} \medskip
\end{minipage}
\end{figure}

\begin{theorem}\label{Thm:CPWLcalc}
Let $f:\Omega \rightarrow \mathbb{R}$ be the CPWL function  described above. For any $p\in[1,+\infty]$, the corresponding HTV of $f$ is given as 
\begin{equation}\label{Eq:CPWLcalc}
 \mathrm{HTV}_p(f) = \frac{1}{2}\sum_{n=1}^N \sum_{k\in {\rm adj}_n} \|\boldsymbol{a}_{n}-\boldsymbol{a}_{k}\|_2 H^{d-1}(P_n \cap P_k),
\end{equation}
where ${\rm adj}_n$ is the set of indices   $k\in\{1,\ldots,N\}$ such that $P_n$ and $P_k$ are neighbors and  $H^{d-1}$ denotes the $(d-1)$-dimensional Hausdorff measure.
\end{theorem}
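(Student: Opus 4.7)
The plan is to compute the distributional Hessian $\mathrm{H}\{f\}$ of a CPWL function explicitly and recognise it as a finite sum of Dirac fences supported on the interior facets $F_{nk}=P_n\cap P_k$, so that Theorem~\ref{Thm:RS1} reduces the evaluation of $\mathrm{HTV}_p(f)=\|\mathrm{H}\{f\}\|_{S_p,\mathcal{M}}$ to a simple sum over neighbouring pairs. Concretely, denoting by $\delta_{F_{nk}}$ the $(d-1)$-dimensional Hausdorff measure restricted to $F_{nk}$ and by $\boldsymbol{\nu}_{n\to k}$ the unit normal of $F_{nk}$ pointing from $P_n$ into $P_k$, I will establish
\[
\mathrm{H}\{f\} \;=\; \sum_{\{n,k\}\text{ adj}} (\boldsymbol{a}_k-\boldsymbol{a}_n)\,\boldsymbol{\nu}_{n\to k}^T\,\delta_{F_{nk}}
\]
in $\mathcal{D}'(\Omega;\mathbb{R}^{d\times d})$. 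The derivation goes by pairing $\mathrm{H}\{f\}$ component-wise against a scalar test function $\varphi\in\mathcal{D}(\Omega)$, splitting $\int_\Omega$ into $\sum_n\int_{P_n}$, and integrating by parts twice on each polytope via the divergence theorem. The bulk integrals vanish because $f|_{P_n}$ is affine; the once-integrated boundary contributions cancel in neighbour pairs by the continuity of $f$ and $\varphi$ combined with the opposite outward normals of $P_n$ and $P_k$ on the common facet; and only the twice-integrated boundary terms survive, reproducing the sum above.

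Next, I will exploit the continuity of $f$ across each facet: along directions tangent to $F_{nk}$ the two affine pieces must agree, which forces the gradient jump to be purely normal, $\boldsymbol{a}_k-\boldsymbol{a}_n=\lambda_{nk}\boldsymbol{\nu}_{n\to k}$ with $|\lambda_{nk}|=\|\boldsymbol{a}_k-\boldsymbol{a}_n\|_2$. Each summand in $\mathrm{H}\{f\}$ is therefore the symmetric rank-one Dirac fence $\lambda_{nk}\,\boldsymbol{\nu}_{n\to k}\boldsymbol{\nu}_{n\to k}^T\,\delta_{F_{nk}}$, whose coefficient matrix has a single nonzero singular value equal to $|\lambda_{nk}|$ and hence Schatten-$p$ norm $\|\boldsymbol{a}_k-\boldsymbol{a}_n\|_2$ for every $p\in[1,+\infty]$. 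After an orthonormal change of coordinates aligning $\boldsymbol{\nu}_{n\to k}$ with an axis, each summand fits Definition~\ref{Def:DiracFence} with $d_1=d-1$, $C$ equal to the tangential projection of $F_{nk}$, and $\mathbf{T}$ the affine map describing the supporting hyperplane, so Theorem~\ref{Thm:RS1}(2) gives
\[
\bigl\|\lambda_{nk}\,\boldsymbol{\nu}_{n\to k}\boldsymbol{\nu}_{n\to k}^T\,\delta_{F_{nk}}\bigr\|_{S_p,\mathcal{M}} = \|\boldsymbol{a}_k-\boldsymbol{a}_n\|_2\,H^{d-1}(F_{nk}).
\]
Because any two distinct facets meet only along a $(d-2)$-dimensional ridge, their intersection is of vanishing $(d-1)$-dimensional Lebesgue measure on the corresponding supporting hyperplane, so an inductive application of Theorem~\ref{Thm:RS1}(3) adds these individual norms. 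Recasting the sum over unordered pairs as the ordered double sum $\sum_n\sum_{k\in\mathrm{adj}_n}$ then introduces the factor $\tfrac{1}{2}$ and produces \eqref{Eq:CPWLcalc}.

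The main obstacle is the first step: rigorously carrying out the double integration by parts on a polyhedral subdivision, with careful bookkeeping of outward normals on every $P_n$ and a clean verification that the once-integrated boundary terms telescope to zero along shared facets. A secondary technical point is the extension of the additivity in Theorem~\ref{Thm:RS1}(3) from two fences to the entire finite family indexed by neighbouring pairs $\{n,k\}$. This is handled by constructing a matrix test function that locally saturates Definition~\ref{Def:RadonSchatten} on every facet simultaneously---taking the value $\mathrm{sign}(\lambda_{nk})\,\boldsymbol{\nu}_{n\to k}\boldsymbol{\nu}_{n\to k}^T$ in a thin tubular neighbourhood of the relative interior of each $F_{nk}$ and smoothly tapered to zero near the $(d-2)$-dimensional ridges---which mirrors, facet by facet, the saturation argument used to prove item (3) of Theorem~\ref{Thm:RS1} for two fences.
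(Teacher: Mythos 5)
Your proposal is correct and follows essentially the same route as the paper's proof: identify $\mathrm{H}\{f\}$ as a finite sum of Dirac fences on the facets weighted by rank-one symmetric matrices, use continuity of $f$ to show the gradient jump is purely normal (so the Schatten-$p$ norm of each coefficient is $\|\boldsymbol{a}_k-\boldsymbol{a}_n\|_2$ for every $p$), and invoke Items 2 and 3 of Theorem~\ref{Thm:RS1} to sum the contributions. The only differences are presentational---you derive the Hessian by integrating by parts over the polytopes rather than by differentiating the indicator functions in Lemma~\ref{Lem:CPWLgrad}, and you align each facet individually rather than applying one global general-position rotation---and your explicit treatment of extending the two-fence additivity to the whole finite family via a simultaneously saturating test function is, if anything, more careful than the paper's direct citation of Item~3.
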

The proof of Theorem \ref{Thm:CPWLcalc} is provided in Appendix \ref{App:CPWL}. We conclude from \eqref{Eq:CPWLcalc} that the HTV seminorm accounts for the change of (directional) slope in all the junctions in the partitioning. Specifically, the HTV of a CPWL function is proportional to a weighted $\ell_1$ penalty on the vector of slope changes, where the weights are proportional to the volume of the intersection region. This can be seen as a convex relaxation of the number of linear regions,.  The latter has the disadvantage  that is unable to differentiate between small and large changes of slope. Another noteworthy observation is the invariance of the HTV of CPWL functions to the value of $p\in[1,+\infty)$, which is unlike the case of Sobolev functions in Proposition \ref{Prop:Sobolev}. This is due to the extreme sparsity of the Hessian of CPWL functions. In fact, the Hessian matrix is zero everywhere except at the borders of linear regions. There, it is a Dirac fence weighted by a rank-1 matrix. The invariance  then follows from the observation that the Schatten-$p$ norms collapse to a single value in rank-1 matrices ({\it i.e.,} their only nonzero singular value). 

%\begin{corollary}\label{Thm:CPWLadmissible}
%  Let $f:\Omega\rightarrow \mathbb{R}$ be  a CPWL function. Then $f$ has finite HTV if and only if there exists an affine function $g:\Omega\rightarrow\mathbb{R}$ such that $f-g$ is  compactly supported.
%\end{corollary}
% \begin{proof}
%Assume that $f$ is a CPWL function so that $f-g$ has compact support for some affine mapping $g$. Using Theorem \ref{Thm:CPWLcalc}, we readily observe that ${\rm H}\{f-g\}$ is a matrix-valued Radon measure and the HTV of $(f-g)$ is finite. The key element is that $C_{n,k}$ has finite measure for $n=0,\ldots,N$ and $k=1,\ldots,K_n$. We conclude that the HTV of $f$ is also finite , due to the simple observation 
%$${\rm H}\{f-g\}={\rm H}\{f\},$$
%that is obtained by using the null space property of the Hessian operator (see Theorem \ref{Thm:RSH}). 
%
%For the converse, consider the minimal partitioning of $\Omega= \bigsqcup_{n=1}^{N_0} P_n$ such that $f$ restricted to $P_n$'s are affine functions. Clearly, there exists $n_1=1,\ldots,N_0$ such that $P_{n_1}$ has infinite Lebesgue measure (otherwise $\Omega$ would have been of finite Lebesgue measure). Moreover, there exists $P_{n_2}$ such that $L_{1,2} = P_{n_1}\cap P_{n_2}$ is of infinite length ({\it i.e.} its $(d-1)$-dimensional Hausdorff measure is infinite). This is due to the fact that the border of an infinite measure polytope is of infinite length. Now,  following \eqref{Eq:FinalRSHCPWL}, we deduce that $\mathrm{HTV}(f) = +\infty$. 
% \end{proof}
% 
   \section{Illustrations of Usage}\label{sec:numerical}
In this section, we illustrate the behavior of the HTV seminorm in different scenarios.  The associated codes are available online\footnote{https://github.com/joaquimcampos/HTV-Learn}. In our first example, we consider the problem of learning one-dimensional mappings from noisy data. Let us mention that,  in dimension $d=1$, the HTV coincides with the second-order total-variation (TV-2)  seminorm, ${\rm TV}^{(2)}(f)= \|{\rm D}^2 \{f\}\|_{\mathcal{M}}$, which has been used to learn  activation functions of deep neural networks \cite{unser2019deepspline,bohra2020deepspline,aziznejad2020deepspline}. In this example, we compare five different learning schemes:
\begin{enumerate}
\item A ReLU neural network with three hidden layers, each layer consisting of 10 neurons;
\item CPWL learning using TV-2 regularization \cite{debarre2020sparsest};
\item  CPWL learning using Lipschitz regularization \cite{aziznejad2021Lipschitz};
\item CPWL learning  using the $L_2$ norm of the first derivative as the regularization term (smoothing spline);
\item RKHS learning with a Gaussian reproducing kernel $k(x,y) = \exp\left( - (x-y)^2/(2\sigma^2)\right)$ whose width is $\sigma = 1/4$.
\end{enumerate} 
We set the hyper parameters of each method such that they all have a similar training loss. The learned mappings are depicted in Figure \ref{Fig:1D}, where we have also indicated  their corresponding HTV  value.  As can be seen, the models that have a lower HTV are simpler and visually more satisfactory. Moreover, we observe that the neural network produces a CPWL mapping with similar complexity as the one produced by the  TV-2 regularization scheme, which is expected to yield the mapping with the smallest HTV. This is in line with the recent results in deep-learning theory that indicate the existence of certain implicit regularizations in the learning of neural networks \cite{savarese2019infinite}.
%\subsection{Grid-Based Approximations of The HTV}
\begin{figure}[t]
\begin{minipage}{1.0\linewidth}
  \centering
  \centerline{\includegraphics[width=\linewidth]{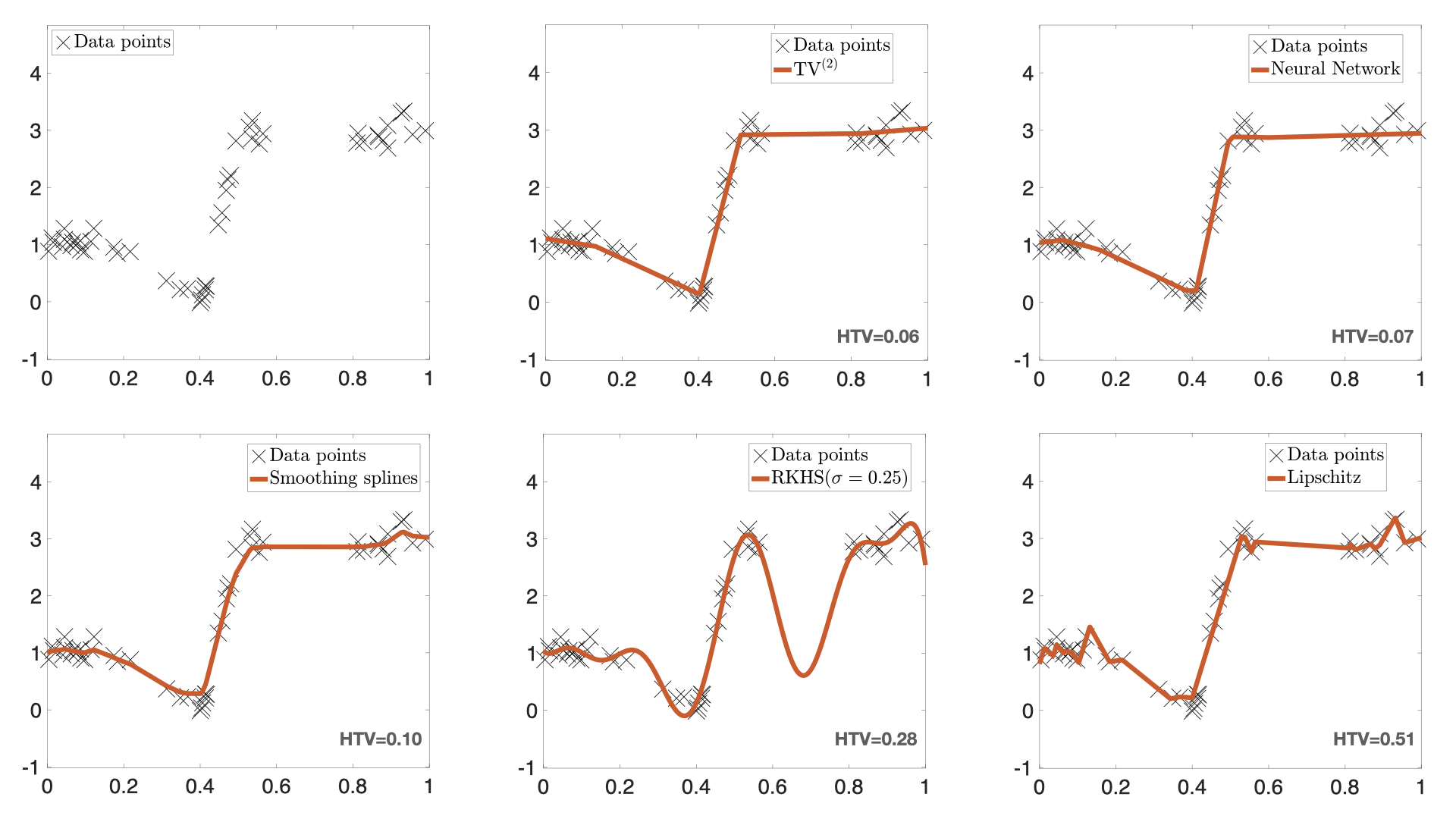}}%  \vspace{2.0cm}
  \caption{  Comparison of five different learning schemes in the one-dimensional setting.}
  \label{Fig:1D} \medskip
\end{minipage}
\end{figure}
%
%\begin{table}[!t]
%	\renewcommand{\arraystretch}{1.3}
%	\centering
%	\caption{HTV of each schemes in the 1D example.} \label{Table:1D}
%	\begin{scriptsize}
%		\begin{tabular}{ c| c c c c c}
%			\hline
%			\hline
%			 Method & Neural network & TV-2 & Lipschitz & Smoothing spline & RKHS \\
%		 	\hline
%		HTV  & {\bf 0.07} &  {\bf 0.06} & 0.51 & 0.1  & 0.28
%			\\
%			\hline
%			\hline 
%		\end{tabular}
%	\end{scriptsize}
%\end{table}

Next, we consider a 2D learning example  where we take $M=5000$ samples from a   2D height map obtained from a facial dataset\footnote{{https://www.turbosquid.com/3d-models/3d-male-head-model-1357522}}.  Note that there are gaps in the training data,  which makes the fitting problem more challenging.  In this case, we compare three different learning schemes: 
\begin{enumerate}
\item A ReLU neural network with 4 hidden layers, each layer consisting of 40 hidden neurons.
\item RKHS learning with a Gaussian radial-basis function   whose width is $\sigma=0.16$.
\item The framework of learning 2D functions with HTV regularization \cite{campos2021HTV}. 
\end{enumerate}
We tune the hyper-parameters of each framework to have a similar training error. The results are depicted in Figure \ref{Fig:2D}. Similarly to the previous case, this example highlights that the HTV favors simple and intuitive models that are visually more adequate.  
\begin{figure}[t]
\begin{minipage}{1.0\linewidth}
  \centering
  \centerline{\includegraphics[width=\linewidth]{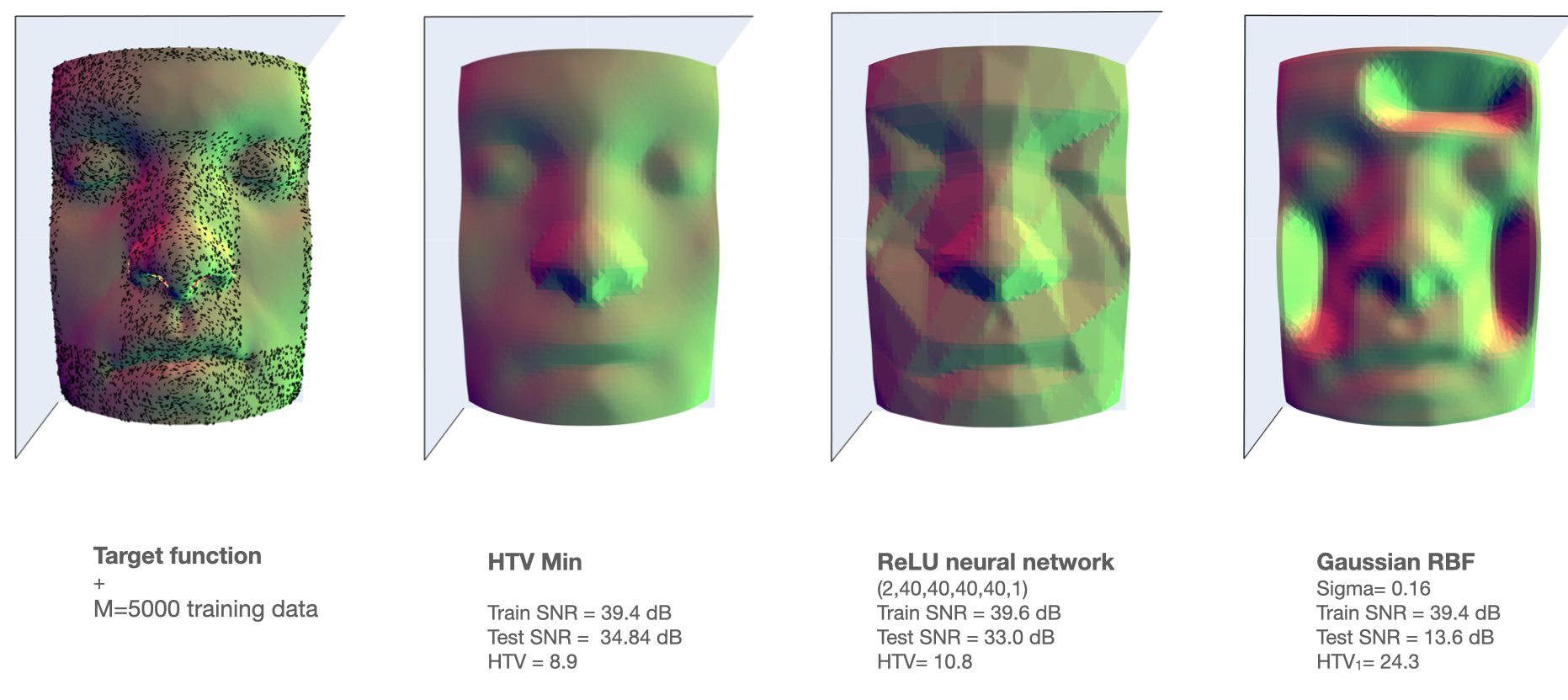}}%  \vspace{2.0cm}
  \caption{Learning of a 2D height map of a face from its nonuniform samples. }
  \label{Fig:2D} \medskip
\end{minipage}
\end{figure}
%\begin{figure}[t]
%\begin{minipage}{1.0\linewidth}
%  \centering
%  \centerline{\includegraphics[width=0.8\linewidth]{Figs/HTVp.pdf}}%  \vspace{2.0cm}
%  \caption{  Approximation of a CPWL function by smooth functions. }
%  \label{Fig:HTVp} \medskip
%\end{minipage}
%\end{figure}

Finally, we study the role of hyper-parameters in the complexity of the final learned mapping. To that end, we plot in Figure \ref{Fig:param_RBF} the ${\rm HTV}_p$ (for three different values of $p$) versus the regularization parameter $\lambda$ and the kernel width $\sigma$. As expected, sharper kernels and lower values of $\lambda$ correspond to a higher HTV in the output.  
\begin{figure}[t]
\begin{minipage}{1.0\linewidth}
  \centering
  \centerline{\includegraphics[width=\linewidth]{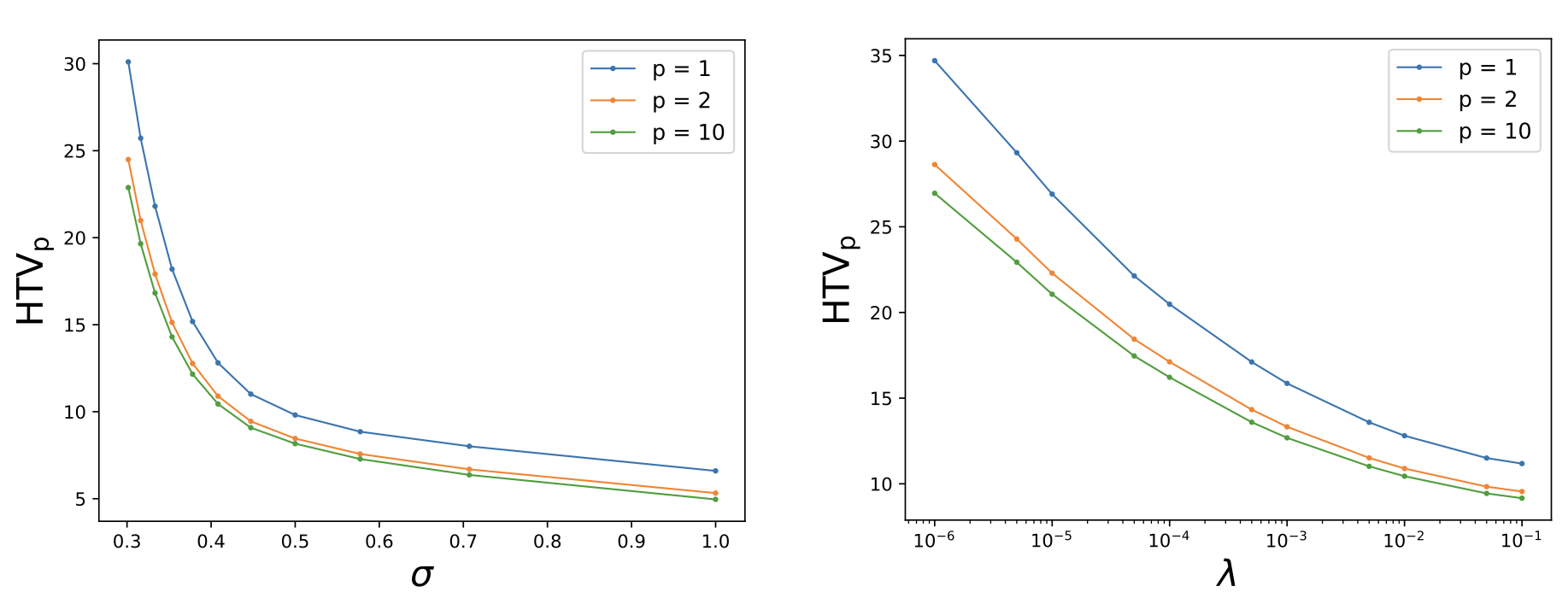}}%  \vspace{2.0cm}
  \caption{The HTV of the learned mapping versus the regularization weight $\lambda$ (left) and the kernel width $\sigma$  (right) in the 2D example. }
  \label{Fig:param_RBF} \medskip
\end{minipage}
\end{figure}
%For the case of learning with neural networks, we have done the similar experiment, but this time we plot in Figure \ref{Fig:param_NN} the HTV versus the number of layers before and after training. While it is expected to see an increase of HTV with respect to the number of layers after training (training more complex architectures result to learning more complex functions), it is quite surprising to see that the trend is reversed in the initial setting: the HTV is decreasing as the number of layers increases. This is an interesting phenomena and it demands   a careful analysis that is out of the scope of this paper.
%\begin{figure}[t]
%\begin{minipage}{1.0\linewidth}
%  \centering
%  \centerline{\includegraphics[width=\linewidth]{}}%  \vspace{2.0cm}
%  \caption{The HTV versus the number $N_L$ of layers before and after training in the 2D example.}
%  \label{Fig:param_NN} \medskip
%\end{minipage}
%\end{figure}

 \section{Conclusion}
 In this paper, we have introduced the Hessian-Schatten total-variation (HTV) seminorm and proposed its use as a complexity measure for the  study of  learning schemes. Our notion of complexity is very general and can be applied to different scenarios. We have proven that the HTV enjoys the properties that are expected of a good complexity measure,  such as invariance to simple transformations and zero penalization of linear regressors. We then computed the HTV of two general classes of functions. In each case, we derived simple formulas for the HTV that allowed us to interpret its underlying behavior. Finally, we have provided some illustrative examples of usage for the comparison of learning algorithms.  Future research directions could be to use this notion of complexity to study learning schemes, in particular,  their generalization power.
 \appendix
 \section{Proof of Theorem \ref{Thm:SinfLinf}}\label{App:SinfLinf}
 \begin{proof}
    It is known that all norms are equivalent in finite-dimensional vector spaces. Consequently, there exist positive constants $c_1,c_2>0$ such that 
 \begin{equation*}
 \forall \mathbf{A}=[a_{i,j}]\in\mathbb{R}^{d\times d}, \quad c_1 \|\mathbf{A}\|_{\rm sum}\leq \|\mathbf{A}\|_{S_{q}} \leq  c_2 \|\mathbf{A}\|_{\rm sum},
 \end{equation*}
where $\|\mathbf{A}\|_{\rm sum}=\sum_{i=1}^d\sum_{j=1}^d |a_{i,j}| $. This immediately yields that
 \begin{equation}\label{Ineq1}
c_1 \sum_{i=1}^d \sum_{j=1}^d \|f_{i,j}\|_{L_\infty}\leq \|\mathbf{F}\|_{L_\infty,S_q} \leq c_2 \sum_{i=1}^d \sum_{j=1}^d \|f_{i,j}\|_{L_\infty},
 \end{equation}
as well as that
 \begin{equation}\label{Ineq2}
c_1 \sup_{\boldsymbol{x}\in\Omega}\|\mathbf{F}(\boldsymbol{x})\|_{\rm sum} \leq \|\mathbf{F}\|_{S_q,L_\infty} \leq c_2 \sup_{\boldsymbol{x}\in\Omega} \|\mathbf{F}(\boldsymbol{x})\|_{\rm sum},
 \end{equation}
for all $\mathbf{F}\in \mathcal{C}_0(\Omega;\mathbb{R}^{d\times d})$. On  the one hand, we have  that
\begin{equation}\label{Ineq3}
\sup_{\boldsymbol{x}\in\Omega}\|\mathbf{F}(\boldsymbol{x})\|_{\rm sum}= \sup_{\boldsymbol{x}\in\Omega} \left(\sum_{i,j=1}^d  |f_{i,j}(\boldsymbol{x})|\right)  \leq  \sum_{i,j=1}^d  \sup_{\boldsymbol{x}\in\Omega}|f_{i,j}(\boldsymbol{x})| = \sum_{i,j=1}^d  \|f_{i,j}\|_{L_\infty}.
\end{equation}
Combining \eqref{Ineq1}, \eqref{Ineq2} and \eqref{Ineq3}, we then deduce that 
\begin{equation}\label{Ineq:OneSide}
\|\mathbf{F}\|_{S_q,L_\infty} \leq c_2 \sup_{\boldsymbol{x}\in\Omega} \|\mathbf{F}(\boldsymbol{x})\|_{\rm sum} \leq c_2  \sum_{i,j=1}^d  \|f_{i,j}\|_{L_\infty} \leq \frac{c_2}{c_1} \|\mathbf{F}\|_{L_\infty,S_q}.
\end{equation}
On the other hand,  using $\|\mathbf{F}(\boldsymbol{x})\|_{\rm sum}\geq |f_{i,j}(\boldsymbol{x})|$ for all $i,j=1,\ldots,d$,  we obtain that 
\begin{equation*}
 \|f_{i,j}\|_{L_\infty} =  \sup_{\boldsymbol{x}\in\Omega}   |f_{i,j}(\boldsymbol{x})| \leq \sup_{\boldsymbol{x}\in\Omega}\|\mathbf{F}(\boldsymbol{x})\|_{\rm sum} , \quad \forall i,j=1,\ldots,d.
\end{equation*}
Summing over all $i,j=1,\ldots,d$ then gives that 
\begin{equation}\label{Ineq4}
\sum_{i=1}^d \sum_{j=1}^d \|f_{i,j}\|_{L_\infty} \leq d^{2}\sup_{\boldsymbol{x}\in\Omega}\|\mathbf{F}(\boldsymbol{x})\|_{\rm sum}.
\end{equation}
Combining \eqref{Ineq1}, \eqref{Ineq2},  and \eqref{Ineq4}, we obtain that 
\begin{equation}\label{Ineq:TheOtherSide}
 \|\mathbf{F}\|_{L_\infty,S_q} \leq  c_2\sum_{i=1}^d \sum_{j=1}^d \|f_{i,j}\|_{L_\infty}   \leq  c_2 d^2  \sup_{\boldsymbol{x}\in\Omega}\|\mathbf{F}(\boldsymbol{x})\|_{\rm sum}\leq  \frac{c_2}{c_1} d^2 \|\mathbf{F}\|_{S_q,L_\infty}.
\end{equation}
Finally, the inequalities \eqref{Ineq:OneSide} and \eqref{Ineq:TheOtherSide} yield \eqref{Ineq:NormEqv} with $A=\frac{c_1}{c_2}$ and $B=\frac{c_2}{c_1}d^2$ (Item 2). Further, it guarantees that the functional $\mathbf{F}\mapsto \|\mathbf{F}\|_{S_q,L_\infty}$ is well-defined (finite) for all $\mathbf{F}\in\mathcal{C}_0(\Omega,\mathbb{R}^{d\times d})$.  It is then easy to verify  the remaining norm properties (positivity, homogeneity and  the triangle inequality) of $\|\cdot\|_{S_q,L_\infty}$ (Item 1). As for Item 3, we note that the norm equivalence implies that both norms induce the same topology over $\mathcal{C}_0(\Omega;\mathbb{R}^{d\times d})$. Hence,  $\left(\mathcal{C}_0(\Omega;\mathbb{R}^{d\times d}),\|\cdot\|_{S_q,L_\infty}\right)$ is a {\it bona fide} Banach space.
\end{proof}
\section{Proof of Theorem \ref{Thm:RS1}}\label{App:RS1}
 \begin{proof}
{\bf Item 1:} We first show that the right-hand side of \eqref{Eq:DualNormEqForm} is well-defined and admits a finite value.  First, note that $\left\|\mathbf{W}(\cdot)\right\|_{S_p}$ is  the composition of the measurable function ${\bf W}:\Omega\rightarrow \mathbb{R}^{d\times d}$ and the Schatten-$p$ norm $\|\cdot\|_{S_p} :\mathbb{R}^{d\times d} \rightarrow \mathbb{R}$ that is continuous and, consequently, measurable. This implies that $\left\|\mathbf{W}(\cdot)\right\|_{S_p}$ is also a measurable function and, hence, its $L_1$ norm is well-defined.  The last step is to show that the $L_1$-norm is finite. From the norm-equivalence property of finite-dimensional vector spaces, we deduce the existence of $b>0$ such that,  for any ${\bf A}= [a_{i,j}]\in\mathbb{R}^{d\times d}$, we have that
\begin{equation}
\left\| {\bf A} \right\|_{S_p}\leq b \| {\bf A} \|_{\rm sum},
\end{equation}
where $\| {\bf A} \|_{\rm sum} = \sum_{i=1}^d \sum_{j=1}^d |a_{i,j}|$. This implies that
\begin{equation*}
\big\| \left\|\mathbf{W}(\cdot)\right\|_{S_p}  \big\|_{L_1} = \int_{\Omega}  \left\|\mathbf{W}(\boldsymbol{x})\right\|_{S_p} {\rm d} \boldsymbol{x} \leq b  \int_{\Omega}  \left\|\mathbf{W}(\boldsymbol{x})\right\|_{{\rm sum}} {\rm d} \boldsymbol{x}  \stackrel{{\rm (i)}}{=} b\sum_{i=1}^{d} \sum_{j=1}^d \|w_{i,j}\|_{L_1} < +\infty,
\end{equation*}
where we have used  Fubini's theorem to deduce (i).  Now, one readily verifies that 
\begin{align*}
\langle \mathbf{W}, \mathbf{F} \rangle &= \sum_{i,j=1}^d \langle w_{i,j} , f_{i,j}  \rangle  =  \sum_{i,j=1}^d \int_{\Omega} w_{i,j}(\boldsymbol{x}) f_{i,j}(\boldsymbol{x}) \mathrm{d}\boldsymbol{x}  =\int_{\Omega} \left(\sum_{i,j=1}^d  w_{i,j}(\boldsymbol{x}) f_{i,j}(\boldsymbol{x}) \right) \mathrm{d}\boldsymbol{x}   \\ &  \int_{\Omega} \left|\sum_{i,j=1}^d  w_{i,j}(\boldsymbol{x}) f_{i,j}(\boldsymbol{x}) \right| \mathrm{d}\boldsymbol{x}    \stackrel{{\rm (i)}}{\leq}   \int_{\Omega}    \| \mathbf{W}(\boldsymbol{x})\|_{S_p}  \|\mathbf{F}(\boldsymbol{x})\|_{S_{q}} \mathrm{d}\boldsymbol{x} \stackrel{{\rm (ii)}}{\leq}  \big\| \left\|\mathbf{W}(\cdot)\right\|_{S_p}  \big\|_{L_1} \|\mathbf{F}\|_{S_{q},L_\infty},
\end{align*}
where we have used the H\"older inequality for Schatten norms (see \eqref{Eq:Holder}) in (i) and the one for $L_p$ norms in (ii). We conclude that 
\begin{equation}
\|\mathbf{W}\|_{S_p,\mathcal{M}} \leq  \big\| \left\|\mathbf{W}(\cdot)\right\|_{S_p}  \big\|_{L_1}.
\end{equation}
To show the equality, we   need to prove that,  for any $\epsilon>0$, there exists an element $\mathbf{F}_{\epsilon} \in \mathcal{C}_0(\Omega;\mathbb{R}^{d\times d})$ with $ \|\mathbf{F}_\epsilon\|_{S_{q},L_\infty}=1$ such that 
\begin{equation}\label{Eq:OtherSide}
\langle \mathbf{W}, \mathbf{F}_\epsilon \rangle  \geq \big\| \left\|\mathbf{W}(\cdot)\right\|_{S_p}  \big\|_{L_1}- \epsilon.
\end{equation}
  Consider the function ${\bf F}: \Omega \rightarrow \mathbb{R}^{d\times d}$ with 
\begin{equation}
{\bf F}(\boldsymbol{x}) = \begin{cases}\frac{{\rm J}_{S_p,{\rm rank}}\left({\bf W}(\boldsymbol{x})\right)}{\| {\bf W}(\boldsymbol{x})\|_{S_p}}, & {\bf W}(\boldsymbol{x})\neq \boldsymbol{0}\\ 0, & \text{otherwise,}\end{cases}
 \end{equation}
 where ${\rm J}_{S_p,{\rm rank}}:\mathbb{R}^{d\times d}\rightarrow\mathbb{R}$ is the sparse duality mapping that maps ${\bf A}\in\mathbb{R}^{d\times d}$ to its minimum rank $(S_p,S_q)$-conjugate (see  \cite{aziznejad2020duality} for the definition and the proof of well-definedness)\footnote{This function coincides with the usual duality mapping for $p\in (1,+\infty)$ and the rank constraint is only needed for the  special case $p=1$.}.  We first note that  ${\bf F}$ is a measurable function. Indeed, from \cite{aziznejad2020duality}, we know that ${\rm J}_{S_p,{\rm rank}}$ is a measurable mapping over $\mathbb{R}^{d\times d}$. Hence, its composition with the measurable function ${\bf W}$ is also measurable.  Moreover, norms are continuous (and, so, Borel-measurable) functionals. Therefore, we have that  ${\bf F}(\boldsymbol{x}) = \mathbbm{1}_{{\bf W} \neq \boldsymbol{0}} \frac{{\rm J}_{S_p,{\rm rank}}\left({\bf W}(\boldsymbol{x})\right)}{\| {\bf W}(\boldsymbol{x})\|_{S_p}}$ is also Borel-measurable. Knowing the measurability of ${\bf F}$, we observe that 
 \begin{align}\label{Eq:Fsat}
 \int_{\Omega} {\rm Tr}( {\bf W}^T(\boldsymbol{x}) {\bf F}(\boldsymbol{x})) {\rm d} \boldsymbol{x} = \int_{\Omega}  \| \mathbf{W}(\boldsymbol{x})\|_{S_p}  \mathrm{d}\boldsymbol{x} = \big\| \left\|\mathbf{W}(\cdot)\right\|_{S_p}  \big\|_{L_1}.
\end{align}
  
We also note that $\|{\bf F}\|_{S_q,L_\infty}=1$. The final step is to use Lusin's theorem (see   \cite[Theorem 7.10]{folland1999real}) to find an $\epsilon$-approximation ${\bf F}_\epsilon \in \mathcal{C}_0(\Omega;\mathbb{R}^{d\times d})$  of ${\bf F}$ on the unit  $S_{q}-L_{\infty}$ ball so that 
 \begin{align}\label{Eq:LusinApp}
\left|  \int_{\Omega} {\rm Tr}( {\bf W}^T(\boldsymbol{x}) {\bf F}(\boldsymbol{x})) {\rm d} \boldsymbol{x} - \int_{\Omega} {\rm Tr}( {\bf W}^T(\boldsymbol{x}) {\bf F}_\epsilon(\boldsymbol{x})) {\rm d} \boldsymbol{x}\right| \leq \epsilon.
\end{align}
Now, combining  \eqref{Eq:LusinApp} with \eqref{Eq:Fsat}, we deduce \eqref{Eq:OtherSide} which completes the proof.

{\bf Item 2:} We first recall that the application of a distribution ${\bf D}$ of the form \eqref{Eq:WformDirac} to any element ${\bf F}\in \mathcal{C}_0(\Omega;\mathbb{R}^{d\times d})$ can be computed as 
\begin{equation}
\langle {\bf D}, {\bf F} \rangle = \int_{C} {\rm Tr}\left({\bf A}^T {\bf F}(\boldsymbol{x}, {\rm T} \boldsymbol{x} )\right) {\rm d}\boldsymbol{x}. 
\end{equation}
Using   H\"older's inequality, for any ${\bf F} \in  \mathcal{C}_0(\Omega;\mathbb{R}^{d\times d})$ with $\|{\bf F}\|_{S_q,L_\infty}=1$, we obtain that 
\begin{align*}
\int_{C} {\rm Tr}\left({\bf A}^T {\bf F}(\boldsymbol{x}, {\rm T} \boldsymbol{x})\right) {\rm d}\boldsymbol{x}&\leq \int_{C} \|{\bf A}\|_{S_p} \|{\bf F}(\boldsymbol{x}, {\rm T} \boldsymbol{x} )\|_{S_q} {\rm d}\boldsymbol{x} \\ & \leq \|{\bf A}\|_{S_p}  \int_{C}1{\rm d}\boldsymbol{x}=\|{\bf A}\|_{S_1} {\rm Leb}(C),
\end{align*}
which implies that $\|{\bf D}\|_{S_p,\mathcal{M}} \leq \|{\bf A}\|_{S_p} {\rm Leb}(C)$. To verify the equality, we   consider an element ${\bf F} \in \mathcal{C}_0(\Omega;\mathbb{R}^{d\times d})$ whose restriction on $C$ is the constant matrix ${\bf A}^* =  \|{\bf A}\|_{S_p}^{-1} {\rm J}_{S_p,{\rm rank}}({\bf A})$. 

{\bf Item 3:} Following the assumption that ${\rm Leb}(C_0)=0$, for any $\epsilon>0$, there exists a measurable set $E\subseteq \mathbb{R}^{d_1}$  with ${\rm Leb}(E)= \epsilon/2$ such that $C_0\subseteq E$. From the construction, we deduce that  the sets $C_1\backslash E$ and $C_2\backslash E$ are separable; hence, there exists a function ${\bf F}_{\epsilon}\in \mathcal{C}_0(\Omega;\mathbb{R}^{d\times d})$ with $\|{\bf F}_{\epsilon}\|_{S_{q},L_{\infty}}=1$ such that  
$${\bf F}_{\epsilon}(\boldsymbol{x}_1,{\bf T}_i\boldsymbol{x}_1) = {\bf A}_i^*, \quad \forall\boldsymbol{x}_1 \in C_i\backslash C_0, i=1,2,$$
where ${\bf A}_i^* = \|{\bf A}_i\|_{S_p}^{-1}{\rm J}_{S_p,{\rm rank}}({\bf A}_i),i=1,2$. This implies that,  for $i=1,2$,  we have that
\begin{align*}
 \langle {\bf D}_i, {\bf F}_{\epsilon} \rangle &=\int_{C_i} {\rm Tr}\left({\bf A}_i^T{\bf F}_{\epsilon}(\boldsymbol{x}_1,{\bf T}_i\boldsymbol{x}_1)\right) {\rm d}\boldsymbol{x}_1  
 \\&=\int_{  C_0} {\rm Tr}\left({\bf A}_i^T{\bf F}_{\epsilon}(\boldsymbol{x}_1,{\bf T}_i\boldsymbol{x}_1)\right) {\rm d}\boldsymbol{x}_1 +\int_{C_i\backslash C_0} {\rm Tr}\left({\bf A}_i^T{\bf A}_i^* \right) {\rm d}\boldsymbol{x}_1 
 \\ &\geq -{\rm Leb}(C_0) \|{\bf A}_i\|_{S_p} + {\rm Leb}(C_i\backslash C_0)  \|{\bf A}_i\|_{S_p} 
 \\& \geq \|{\bf A}_i\|_{S_p} ({\rm Leb}(C_i) - \epsilon).
\end{align*} 
Hence, for any $\epsilon>0$, we have that 
\begin{align*}
 \|{\bf D}_1 + {\bf D}_2\|_{S_p,\mathcal{M}} &\geq \langle {\bf D}_1 + {\bf D}_2 , {\bf F}_{\epsilon} \rangle 
\\&\geq \|{\bf A}_1\|_{S_p} {\rm Leb}(C_1)+\|{\bf A}_2\|_{S_p} {\rm Leb}(C_2) -  \epsilon(\|{\bf A}_1\|_{S_p}+\|{\bf A}_2\|_{S_p}).
\end{align*}
By letting $\epsilon\rightarrow 0$, we deduce that $ \|{\bf D}_1 + {\bf D}_2\|_{S_p,\mathcal{M}}  \geq \| {\bf D}_1\|_{S_p,\mathcal{M}} + \|{\bf D}_2\|_{S_p,\mathcal{M}}$ which, together with the triangle inequality, yields the announced equality. 
 \end{proof}
 \section{Proof of Theorem \ref{Thm:AI}}\label{App:AI}
 \begin{proof}

{\bf Item 1:} Starting from ${\rm H}\{f\}=\boldsymbol{0}$, we deduce that $ \frac{\partial^2 f}{\partial x_i^2}= 0$ for $i=1,\ldots,d$. Following Proposition 6.1 in \cite{unser2014anintroduction}, we deduce that the null space of $\frac{\partial^2}{\partial x_1^2 }$ can only contain (multivariate) polynomials. Using this, we infer that any $p$ in the null space of $\frac{\partial^2}{\partial x_1^2}$ is of the form   $ p(\boldsymbol{x}) = a_1 x_1 + q_1(\boldsymbol{x})$
for some $a_1\in\mathbb{R}$ and some multivariate polynomial $q_1$ that does not depend on $x_1$. Finally, one verifies by induction that $q_1(\boldsymbol{x}) = \sum_{i=2}^d a_i x_i + q_0(\boldsymbol{x})$, where $q_0$ is a multivariate polynomial that does not depend on any of its variables and so is constant, {\it i.e.} $q_0(\boldsymbol{x}) =b$ for some $b\in\mathbb{R}$. We conclude the proof by remarking that any affine mapping is indeed in the null space of $\rm H$. 
 
{\bf Item 2:}  By invoking that $ {\rm H}\{f(\cdot-\boldsymbol{x}_0) \} = {\rm H}\{f \}(\cdot-\boldsymbol{x}_0)$, we immediately deduce that 
\begin{align*}
\mathrm{HTV}\left(f(\cdot-\boldsymbol{x}_0)\right)&= \sup \left\{ \langle   {\rm H}\{f \}(\cdot-\boldsymbol{x}_0),\mathbf{F}\rangle:  \mathbf{F}\in  \mathcal{D}(\mathbb{R}^d;\mathbb{R}^{d\times d}), \|\mathbf{F}\|_{S_q,L_\infty}=1 \right\}\\& = \sup \left\{ \langle   {\rm H}\{f \},\mathbf{F}(\cdot+\boldsymbol{x}_0)\rangle:  \mathbf{F}\in  \mathcal{D}(\mathbb{R}^d;\mathbb{R}^{d\times d}), \|\mathbf{F}\|_{S_q,L_\infty}=1 \right\} \\&= \mathrm{HTV}(f).
\end{align*}
Similarly, following the chain rule, we obtain that $ {\rm H}\{f(\alpha\cdot) \} = \alpha^2 {\rm H}\{f \}(\alpha\cdot)$. This yields  that 
\begin{align*}
\mathrm{HTV}\left(f(\alpha\cdot)\right)&= \alpha^2\sup \left\{ \langle   {\rm H}\{f \}(\alpha\cdot),\mathbf{F}\rangle:  \mathbf{F}\in  \mathcal{D}(\mathbb{R}^d;\mathbb{R}^{d\times d}), \|\mathbf{F}\|_{S_q,L_\infty}=1 \right\}\\& = \alpha^2\sup \left\{ \langle   {\rm H}\{f \},\alpha^{-d}\mathbf{F}(\alpha^{-1}\cdot)\rangle:  \mathbf{F}\in  \mathcal{D}(\mathbb{R}^d;\mathbb{R}^{d\times d}), \|\mathbf{F}\|_{S_q,L_\infty}=1 \right\} \\& = |\alpha|^{2-d} \sup \left\{ \langle   {\rm H}\{f \}, \mathbf{F}( \cdot)\rangle:  \mathbf{F}\in  \mathcal{D}(\mathbb{R}^d;\mathbb{R}^{d\times d}), \|\mathbf{F}\|_{S_q,L_\infty}=1 \right\}\\&= |\alpha|^{2-d}\mathrm{HTV}(f).
\end{align*}
As for the last invariance property, we use the formula for the  Hessian of a rotated function 
$${\rm H}\{f({\bf U}\cdot) \} = {\bf U}^T {\rm H}\{f \}({\bf U}\cdot) {\bf U}.$$
This implies that
\begin{align*}
\mathrm{HTV}\left(f({\bf U}\cdot)\right)&=  \sup \left\{ \langle {\bf U}^T  {\rm H}\{f \}({\bf U} \cdot){\bf U},\mathbf{F}\rangle:  \mathbf{F}\in  \mathcal{D}(\mathbb{R}^d;\mathbb{R}^{d\times d}), \|\mathbf{F}\|_{S_q,L_\infty}=1 \right\}\\& =  \sup \left\{ \langle   {\rm H}\{f \}({\bf U}  \cdot), {\bf U} \mathbf{F}(\cdot){\bf U}^T\rangle:  \mathbf{F}\in  \mathcal{D}(\mathbb{R}^d;\mathbb{R}^{d\times d}), \|\mathbf{F}\|_{S_q,L_\infty}=1 \right\} \\& =  \sup \left\{ \langle   {\rm H}\{f \}, {\bf U}\mathbf{F}({\bf U}^T \cdot){\bf U}^T\rangle:  \mathbf{F}\in  \mathcal{D}(\mathbb{R}^d;\mathbb{R}^{d\times d}), \|\mathbf{F}\|_{S_q,L_\infty}=1 \right\}\\&=   \mathrm{HTV}(f),
\end{align*}
where the last equality follows from the invariance of Schatten norms under orthogonal transformations (as exploited, for example, in \cite{Lefki2012Hessian,Lefki2013Hessian}).
\end{proof}
\section{Proof of Theorem \ref{Thm:CPWLcalc}}\label{App:CPWL}
Let $f:\mathbb{R}^d\rightarrow\mathbb{R}$ be a CPWL function with linear regions $P_n \subseteq \Omega$ and affine parameters $\boldsymbol{a}_n \in\mathbb{R}^d$ and $b_n \in \mathbb{R}$, for $n=1,\ldots,N$.  We first compute the gradient of $f$. 
\begin{lemma}\label{Lem:CPWLgrad}
The gradient of a CPWL function $f:\Omega \rightarrow\mathbb{R}$ as described above can be expressed as 
\begin{equation}\label{Eq:CPWLgrad}
\boldsymbol{\nabla} f(\boldsymbol{x})= \sum_{n=1}^N \boldsymbol{a}_n \mathbbm{1}_{P_n}(\boldsymbol{x}), 
\end{equation}
for almost every $\boldsymbol{x}\in\Omega$.
\end{lemma}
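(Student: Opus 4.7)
The plan is to compute the weak (distributional) gradient of $f$ directly from the definition and show that it coincides, as a distribution, with the locally integrable function $\boldsymbol{x} \mapsto \sum_{n=1}^N \boldsymbol{a}_n \mathbbm{1}_{P_n}(\boldsymbol{x})$; since this function is defined almost everywhere on $\Omega$ (the boundaries between polytopes have Lebesgue measure zero), this will give the identification \eqref{Eq:CPWLgrad} a.e. The overall approach is essentially an application of the divergence theorem on each polytope, followed by a cancellation of the boundary terms that is driven by the continuity of $f$.

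The first step is to pick an arbitrary vector-valued test function $\boldsymbol{\varphi} \in \mathcal{D}(\Omega;\mathbb{R}^d)$ and to write
\begin{equation*}
\langle \boldsymbol{\nabla} f , \boldsymbol{\varphi}\rangle = -\langle f, \nabla\cdot\boldsymbol{\varphi}\rangle = -\sum_{n=1}^N \int_{P_n} \bigl(\boldsymbol{a}_n^T\boldsymbol{x} + b_n\bigr)\,(\nabla\cdot\boldsymbol{\varphi})(\boldsymbol{x})\,\mathrm{d}\boldsymbol{x},
\end{equation*}
where the splitting uses the piecewise-affine form of $f$ on the partition. On each convex polytope $P_n$, applying the divergence theorem yields
\begin{equation*}
\int_{P_n}\bigl(\boldsymbol{a}_n^T\boldsymbol{x}+b_n\bigr)(\nabla\cdot\boldsymbol{\varphi})\,\mathrm{d}\boldsymbol{x} = \int_{\partial P_n}\bigl(\boldsymbol{a}_n^T\boldsymbol{x}+b_n\bigr)\,\boldsymbol{\varphi}(\boldsymbol{x})\cdot \boldsymbol{n}_n(\boldsymbol{x})\,\mathrm{d}H^{d-1}(\boldsymbol{x}) - \int_{P_n}\boldsymbol{a}_n\cdot\boldsymbol{\varphi}(\boldsymbol{x})\,\mathrm{d}\boldsymbol{x},
\end{equation*}
with $\boldsymbol{n}_n$ the outward unit normal to $P_n$.

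The key observation is that the boundary integrals sum to zero. Indeed, $\partial P_n$ decomposes into a union of facets; each interior facet $P_n\cap P_k$ (for $k\in\mathrm{adj}_n$) is shared with exactly one neighboring polytope $P_k$, on which the outward normal is $-\boldsymbol{n}_n$. Because $f$ is continuous, the affine forms $\boldsymbol{a}_n^T\boldsymbol{x}+b_n$ and $\boldsymbol{a}_k^T\boldsymbol{x}+b_k$ agree on $P_n\cap P_k$, so the two contributions cancel exactly. Any remaining facet lies on $\partial \Omega$, and the contribution there vanishes since $\boldsymbol{\varphi}$ has compact support in $\Omega$. Collecting the volume terms then gives
\begin{equation*}
\langle \boldsymbol{\nabla} f,\boldsymbol{\varphi}\rangle = \sum_{n=1}^N \int_{P_n}\boldsymbol{a}_n\cdot\boldsymbol{\varphi}(\boldsymbol{x})\,\mathrm{d}\boldsymbol{x} = \int_{\Omega}\Bigl(\sum_{n=1}^N \boldsymbol{a}_n\mathbbm{1}_{P_n}(\boldsymbol{x})\Bigr)\cdot \boldsymbol{\varphi}(\boldsymbol{x})\,\mathrm{d}\boldsymbol{x},
\end{equation*}
which is exactly the claim in the sense of distributions, hence almost everywhere.

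The main technical obstacle is the rigorous justification of the divergence theorem on each $P_n$ when $\Omega$ is unbounded, since some $P_n$ may be unbounded polytopes. This is easily handled by intersecting with a large ball $B_R\subset\Omega$ containing $\mathrm{supp}(\boldsymbol{\varphi})$: on $P_n\cap B_R$ the divergence theorem applies in the standard form for Lipschitz domains, and the portion of $\partial(P_n\cap B_R)$ lying on $\partial B_R$ does not contribute because $\boldsymbol{\varphi}$ vanishes there. A minor additional care must be taken to ensure that the set of points lying on two or more facets simultaneously (edges, vertices, and higher-codimension skeleta of the partition) has vanishing $H^{d-1}$ measure and therefore does not affect the pairwise cancellation argument; this is automatic from the polytopal structure of the partition.
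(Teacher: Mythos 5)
Your argument is correct, but it takes a genuinely different route from the paper's. The paper proves the lemma pointwise: it observes that the union of the interiors $U_n$ of the polytopes has full measure in $\Omega$, and at any $\boldsymbol{x}_0\in U_n$ it identifies each partial derivative $\frac{\partial f}{\partial x_i}(\boldsymbol{x}_0)$ with $a_{n,i}$ by restricting $f$ to the coordinate line through $\boldsymbol{x}_0$ and noting that this restriction is a linear spline that is genuinely linear near $x_{0,i}$. That argument is shorter and more elementary, but it only yields the classical gradient a.e.; to then differentiate once more in the distributional sense (as the paper does in Step 2 of the proof of Theorem \ref{Thm:CPWLcalc}) one must still invoke that $f$ is locally Lipschitz so that its a.e.\ pointwise gradient coincides with its weak gradient. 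Your route---pairing with a test field, integrating by parts on each $P_n$ (localized to a ball containing $\mathrm{supp}(\boldsymbol{\varphi})$ to handle unbounded pieces), and cancelling the facet integrals using the continuity of $f$ across $P_n\cap P_k$ together with the opposite outward normals---establishes directly that the \emph{distributional} gradient is the locally integrable function $\sum_{n}\boldsymbol{a}_n\mathbbm{1}_{P_n}$, which is the statement actually consumed downstream, and the a.e.\ pointwise identity then follows trivially on the interiors. Your boundary-cancellation step also anticipates the fact, used later in the paper, that $\boldsymbol{a}_n^T\boldsymbol{x}+b_n=\boldsymbol{a}_k^T\boldsymbol{x}+b_k$ on $P_n\cap P_k$. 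The only points to state a bit more carefully are the ones you already flag: that the interior part of $\partial P_n$ is covered, up to $H^{d-1}$-null sets, by the facets $P_n\cap P_k$ with $k\in\mathrm{adj}_n$, and that the final sentence ``hence almost everywhere'' deserves the one-line remark that on each open interior $U_n$ the function is affine and therefore classically differentiable with gradient $\boldsymbol{a}_n$.
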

\begin{proof}
The interior of $P_n$  is denoted by $U_n$  with $n=1\ldots,N$. We then note that $\Omega\backslash \left( \bigcup_{n=1}^N U_n\right) $ is a set of measure zero. Hence, it is sufficient to show that  $\boldsymbol{\nabla} f(\boldsymbol{x})=\boldsymbol{a}_n$ for any $\boldsymbol{x}_0=(x_{0,1},\ldots,x_{0,d})\in U_n$. We define the functions $g_i:\mathbb{R}\rightarrow\mathbb{R}$ as 
$$g_i(x) = f(x_{0,1}, \ldots, x_{0,i-1}, x, x_{0,i+1},\ldots,x_{0,d}).$$
Following the definition of CPWL mappings, $g_i$ is a linear spline ({\it i.e.}, a 1D continuous and piecewise-linear function). Hence, it is locally linear and can be expressed as $g_i(x) = a_{n,i} x + (\sum_{j\neq i} a_{n,j} x_{0,j} +b)$  in an open neighborhood of $x_{0,i}$. Moreover, it is clear  that $a_{n,i}= g_i'(x_{0,i})= \frac{\partial f}{\partial x_i}(\boldsymbol{x}_0)$. Hence, 
$$ \boldsymbol{\nabla} f(\boldsymbol{x}_0)= \left( \frac{\partial f}{\partial x_1}(\boldsymbol{x}_0),\ldots,\frac{\partial f}{\partial x_d}(\boldsymbol{x}_0)\right) = \left( a_{n,1},\ldots,a_{n,d}\right) = \boldsymbol{a}_n.$$
\end{proof}
\begin{proof}[Proof of Theorem  \ref{Thm:CPWLcalc}]
We start by introducing some notions that are required in the proof. For each $n=1,\ldots,N$ and $k\in {\rm adj}_n$, we denote the intersection of $P_n$ and $P_k$ by $L_{n,k}=P_n \cap P_k$, which is itself a convex polytope with co-dimension $(d-1)$,  in the sense that it lies on a hyperplane $H_{n,k} = \{\boldsymbol{x}\in\mathbb{R}^d: \boldsymbol{u}_{n,k}^T \boldsymbol{x} + \beta_{n,k}=0\}$ for some normal vector $\boldsymbol{u}_{n,k}= (u_{n,k,i})\in \mathbb{R}^d$ with $\|\boldsymbol{u}_{n,k}\|_2=1$ and some shift value $\beta_{n,k}\in \mathbb{R}$. We adopt the convention that $\boldsymbol{u}_{n,k}$ refers to the outward normal vector, so that $\boldsymbol{u}_{n,k}^T\boldsymbol{x}+ \beta_{n,k} \leq 0$ for all $\boldsymbol{x}\in P_n$. We divide the proof in four steps:

{\bf Step 1: Transformation to the General Position.} First,  without any loss of generality,  we assume that all entries of $\boldsymbol{u}_{n,k}$ for all $n=1,\ldots,N$ and $k\in{\rm adj}_n$ are nonzero. Consider a unitary matrix ${\bf V}\in\mathbb{R}^{d\times d}$ such that $[{\bf V} \boldsymbol{u}_{n,k}]_i \neq 0$ for all $n=1,\ldots,N$, $k\in{\rm adj}_n$,  and $i=1,\ldots,d$. We remark that the function $g= f({\bf V}\cdot)$ is CPWL with linear regions $\tilde{P}_n= {\bf V}^T P_n$ and affine parameters $\tilde{\boldsymbol{a}}_n = {\bf V}^T\boldsymbol{a}_n$ and $\tilde{b}_n=b_n$ for $n=1,\ldots,N$. Now, if \eqref{Eq:CPWLcalc} holds for $g$, then we can invoke the invariance properties of the HTV (see Theorem \ref{Thm:AI}) to deduce that  
\begin{align*}
\mathrm{HTV}_p(f) &= \mathrm{HTV}_p\left(g\right) 
\\& =   \frac{1}{2}\sum_{n=1}^N \sum_{k\in {\rm adj}_n} \|\tilde{\boldsymbol{a}}_{n}-\tilde{\boldsymbol{a}}_{k}\|_2 H^{d-1}(\tilde{P}_n \cap \tilde{P}_k)
\\& = \frac{1}{2}\sum_{n=1}^N \sum_{k\in {\rm adj}_n} \|{\bf V}^T (\boldsymbol{a}_{n}-\boldsymbol{a}_{k})\|_2 H^{d-1}({\bf V}^T(P_n \cap P_k))
 \\ &  = \frac{1}{2}\sum_{n=1}^N \sum_{k\in {\rm adj}_n} \|\boldsymbol{a}_{n}-\boldsymbol{a}_{k}\|_2 H^{d-1}(P_n \cap P_k),
\end{align*}
where the last equality is due to the invariance of the Hausdorff measure and the $\ell_2$ norm to orthonormal transformations. 

{\bf Step 2: Calculation of the Hessian Distribution.} From now on, we assume that all entries of $u_{n,k}$ are nonzero,  with
$$ u_{n,k,i}=0 , \qquad n=0,\ldots,N, \quad k\in {\rm adj}_n,\quad i=1,\ldots,d.$$
This allows us to view $H_{n,k}$ as the graph of the affine mapping $T_{n,k}:\mathbb{R}^{d-1}\rightarrow\mathbb{R}$,  with 
$$ T_{n,k}(x_1,\ldots,x_{d-1})= \beta_{n,k}- \frac{\sum_{i=1}^{d-1} u_{n,k,i}x_i }{u_{n,k,d}},$$
and to define $C_{n,k}= \{ \boldsymbol{x}\in\mathbb{R}^{d-1}: (\boldsymbol{x}, T_{n,k} \boldsymbol{x}) \in L_{n,k}\}\subseteq\Omega$ as the preimage of $L_{n,k}$ over $T_{n,k}$. We also remark that, due to this affine projection, the  $(d-1)$-dimensional  Hausdorff measure of $L_{n,k}$ and the Lebesgue measure of $C_{n,k}$ are related by the coefficient $u_{n,k,d}$.  Indeed, we have that $H^{d-1}(L_{n,k}) = \frac{{{\rm Leb}}(C_{n,k})}{|u_{n,k,d}|}$. Using these notions, we now compute the matrix-valued distribution ${\rm H}\{f\} \in\mathcal{M}(\Omega;\mathbb{R}^{d\times d})$. We first note that, for all  $n=0,\ldots,N$ and $i=1,\ldots,d$, we have that 
\begin{equation}
\frac{\partial \mathbbm{1}_{P_n}}{\partial x_i}(\boldsymbol{x})= \sum_{k\in {\rm adj}_n}  -{\rm sgn}(u_{n,k,i}) \delta\left(x_i + \frac{\sum_{j\neq i} u_{n,k,j} x_j + \beta_{n,k}}{u_{n,k,i}} \right) \mathbbm{1}_{L_{n,k}}(\boldsymbol{x}).
\end{equation}
 Using the relation $\delta(\alpha \cdot) = |\alpha|^{-1} \delta(\cdot)$ for all $\alpha\in \mathbb{R}$, we obtain that 
\begin{equation}
 \frac{\partial \mathbbm{1}_{P_n}}{\partial x_i}(\boldsymbol{x})= \sum_{k\in {\rm adj}_n} \frac{-u_{n,k,i}}{|u_{n,k,d}|}\delta\left(x_d - T_{n,k} \boldsymbol{x}_1 \right)\mathbbm{1}_{L_{n,k}}(\boldsymbol{x}),
\end{equation}
where $\boldsymbol{x}_1=(x_1,\ldots,x_{d-1})\in\mathbb{R}^{d-1}$. Following the definition of $C_{n,k}$, we immediately get that 
\begin{equation}
\delta\left(x_d -   T_{n,k} \boldsymbol{x}_1\right)\mathbbm{1}_{L_{n,k}}(\boldsymbol{x})=\delta\left(x_d -  T_{n,k} \boldsymbol{x}_2 \right)\mathbbm{1}_{C_{n,k}}(\boldsymbol{x}_1),
\end{equation}
which   leads to 
\begin{equation}\label{Eq:partialInd}
\frac{\partial \mathbbm{1}_{P_n}}{\partial x_i}(\boldsymbol{x})= \sum_{k\in {\rm adj}_n} \frac{-u_{n,k,i}}{|u_{n,k,d}|}\delta\left(x_d - T_{n,k} \boldsymbol{x}_1 \right) \mathbbm{1}_{C_{n,k}}(\boldsymbol{x}_1).
\end{equation}
Combining \eqref{Eq:partialInd} with Lemma \ref{Lem:CPWLgrad}, we then deduce that 
\begin{align*}
 \frac{\partial^2 f}{\partial x_i\partial x_j} (\boldsymbol{x})&= \sum_{n=1}^N {a}_{n,j} \frac{\partial \mathbbm{1}_{P_n}}{\partial x_i}(\boldsymbol{x})\\&= \sum_{n=1}^N {a}_{n,j} \sum_{k\in {\rm adj}_n} \frac{-u_{n,k,i}}{|u_{n,k,d}|}\delta\left(x_d - T_{n,k} \boldsymbol{x}_1 \right) \mathbbm{1}_{C_{n,k}}(\boldsymbol{x}_1).
\end{align*}
Now, since $L_{n,k} = P_n\cap P_{k}$ and   $\boldsymbol{u}_{n,k} =( - \boldsymbol{u}_{k,n})$, we can rewrite the second-order partial derivatives as 
 \begin{align*}
 \frac{\partial^2 f}{\partial x_i\partial x_j} (\boldsymbol{x})&= \frac{1}{2}\sum_{n=1}^N \sum_{k\in {\rm adj}_n} ({a}_{k,j}-a_{n,j}) 
 \frac{u_{n,k,i}}{|u_{n,k,d}|}\delta\left(x_d - T_{n,k} \boldsymbol{x}_1 \right) \mathbbm{1}_{C_{n,k}}(\boldsymbol{x}_1).
 \end{align*}
Putting it in  matrix form, we conclude that  the Hessian is a sum of disjoint  Dirac fences,  as in  
\begin{align}\label{Eq:HessianCPWL}
{\rm H}\{f\}(\boldsymbol{x}) & = \left[ \frac{\partial^2 f}{\partial x_i\partial x_j}(\boldsymbol{x}) \right] \nonumber \\
& = \frac{1}{2}\sum_{n=1}^N \sum_{k\in {\rm adj}_n} \left[({a}_{k,j}-a_{n,j}) 
 \frac{u_{n,k,i}}{|u_{n,k,d}|}\right]\delta\left(x_d - T_{n,k} \boldsymbol{x}_1 \right) \mathbbm{1}_{C_{n,k}}(\boldsymbol{x}_1).
\end{align}
{\bf Step 3: Computation of the HTV.}  By invoking Item 3 of Theorem \ref{Thm:RS1}, we deduce that 
\begin{align}
\mathrm{HTV}_p(f) &= \left\| {\rm H}\{f\} \right\|_{S_p,\mathcal{M}}  \nonumber
\\& =  \frac{1}{2}\sum_{n=1}^N \sum_{k\in {\rm adj}_n} \left\|\left[({a}_{k,j}-a_{n,j}) 
 \frac{u_{n,k,i}}{|u_{n,k,d}|}\right]\delta\left(x_d - T_{n,k} \boldsymbol{x}_1 \right) \mathbbm{1}_{C_{n,k}}(\boldsymbol{x}_1)\right\|_{S_p,\mathcal{M}} \nonumber
\\& = \frac{1}{2}\sum_{n=1}^N \sum_{k\in {\rm adj}_n} \left\|\left[({a}_{k,j}-a_{n,j}) 
 \frac{u_{n,k,i}}{|u_{n,k,d}|}\right]\right\|_{S_p} {\rm Leb}(C_{n,k}), \label{Eq:FinalRSHCPWL}
\end{align}
where the last equality results from  Item 2 of Theorem \ref{Thm:RS1}. 

Finally, we use the continuity of $f$ to deduce that,  for any pair of points $\boldsymbol{p}_1,\boldsymbol{p}_2\in H_{n,k}$, we have that 
 \begin{align*}
 \boldsymbol{a}_n^T \boldsymbol{p}_i + b_n =  \boldsymbol{a}_k^T \boldsymbol{p}_i + b_k, \qquad i=1,2.  
 \end{align*}
 Subtracting the above equalities for $i=1$ and $i=2$,  we obtain that 
 \begin{align*}
 \boldsymbol{a}_n^T (\boldsymbol{p}_1-\boldsymbol{p}_2) =  \boldsymbol{a}_k^T (\boldsymbol{p}_1-\boldsymbol{p}_2).  
 \end{align*}
 However, $(\boldsymbol{p}_1-\boldsymbol{p}_2)$ is orthogonal to $\boldsymbol{u}_{n,k}$. Hence, the vector $(\boldsymbol{a}_k-\boldsymbol{a}_n)$ points in the direction of $\boldsymbol{u}_{n,k}$. This implies that the matrix 
 $$\left[({a}_{k,j}-a_{n,j}) 
 \frac{u_{n,k,i}}{|u_{n,k,d}|}\right]= |u_{n,k,d}|^{-1} \boldsymbol{u}_{n,k}(\boldsymbol{a}_k- \boldsymbol{a}_n)^T =\frac{\|\boldsymbol{a}_k-\boldsymbol{a}_n\|_2}{|u_{n,k,d}|}  \boldsymbol{u}_{n,k}\boldsymbol{u}_{n,k}^T$$ 
 is rank-1 and symmetric. Hence, for any  $p\in [1,+\infty]$, its Schatten-$p$ norm is equal to the absolute value of its trace.   The replacement of this in \eqref{Eq:FinalRSHCPWL} and the use of $H^{d-1}(L_{n,k}) = \frac{{{\rm Leb}}(C_{n,k})}{|u_{n,k,d}|}$ yields the announced expression \eqref{Eq:CPWLcalc}.
 \end{proof}
\bibliographystyle{IEEEbib.bst}
\bibliography{Aziznejad.bib}
\end{document}